\newcommand{\R}{\mathbb{R}}
\newcommand{\E}{\mathbb{E}}
\def\ceil#1{\lceil #1 \rceil}
\definecolor{darkred}{RGB}{200,0,0}
\newtheorem{theorem}{Theorem}
\newtheorem*{theorem*}{Theorem}
\newtheorem{lemma}[theorem]{Lemma}
\newtheorem{claim}[theorem]{Claim}
\newtheorem{prop}[theorem]{Proposition}
\newtheorem{corollary}[theorem]{Corollary}
\newtheorem*{remark*}{Remark}
\newtheorem*{claim*}{Claim}
\newtheorem*{remark}{Remark}
\newtheorem{fact}[theorem]{Fact}
\renewcommand{\max}{\mathsf{max}}
\renewcommand{\S}{\mathbb{S}}
\newcommand{\N}{\mathbb N}
\newcommand{\eps}{\varepsilon}
\newcommand{\ip}[2]{\langle #1,#2 \rangle}
\newcommand{\conv}{\mathsf{conv}}
\newcommand{\f}{\mathsf{m}}
\DeclareMathOperator{\relu}{\mathsf{ReLU}}
\DeclareMathOperator{\ICNN}{\mathsf{ICNN}}
\DeclareMathOperator{\CPWL}{\mathsf{CPWL}}
\DeclareMathOperator{\MAX}{\mathsf{MAX}}
\DeclareMathOperator{\cP}{\mathcal{P}}
\definecolor{cyan}{RGB}{0,190,210}
\definecolor{magenta}{RGB}{230,0,250}
\definecolor{darkkred}{RGB}{230,0,0}
\definecolor{darkgreen}{RGB}{0,160,0}
\numberwithin{equation}{section}
\begin{document}

\title[On the depth of monotone R\lowercase{e}LU neural networks and ICNN\lowercase{s}]{On the depth of monotone R\lowercase{e}LU neural networks and ICNN\lowercase{s}}

%    Only \author and \address are required; other information is
%    optional.  Remove any unused author tags.

%    author one information

\author[Bakaev]{Egor Bakaev}
\address{Department of Computer Science, the University of Copenhagen}

\author[Brunck]{Florestan Brunck}
\address{Department of Computer Science, the University of Copenhagen}

\author[Hertrich]{Christoph Hertrich}
\address{University of Technology Nuremberg}

\author[Reichman]{Daniel Reichman}
\address{Worcester Polytechnic Institute}

\author[Yehudayoff]{Amir Yehudayoff}
\address{Department of Computer Science, the University of Copenhagen,
and Department of Mathematics, Technion-IIT}

\begin{abstract}
We study two models of $\relu$ neural networks: monotone networks ($\relu^+$)
and input convex neural networks ($\ICNN$).
Our focus is on expressivity, mostly in terms of depth,
and we prove the following lower bounds.
For the maximum function $\MAX_n$ computing the maximum of $n$ real numbers,
we show that $\relu^+$ networks
cannot compute $\MAX_n$,
or even approximate it.
We prove a sharp $n$ lower bound
on the $\ICNN$ depth complexity of 
$\MAX_n$.
We also prove depth separations between
$\relu$ networks and $\ICNN$s;
for every~$k$, there is a depth-$2$ $\relu$ network of size $O(k^2)$ that cannot be simulated by a depth-$k$ $\ICNN$.
The proofs are based on deep connections between neural networks and polyhedral geometry, and also use isoperimetric properties of triangulations.
\end{abstract}

\maketitle

\section{Introduction}

Neural networks (a.k.a.\ multilayer perceptrons) form an important computational model
because of their many applications.
The gates in a neural network, generally speaking,
perform linear operations followed by non-linear operations.
A standard non-linearity is the rectified linear unit ($\relu$) defined by
$\relu(x) = \max \{0,x\}$.
$\relu$ networks form a central family of neural networks (see~\cite{hertrich2021towards,haase2023lower,valerdi2024minimal}
and the many references within).

There are two categories of high-level questions concerning neural networks:
``dynamic'' and ``static''.
Dynamic questions are about the behavior of the neural network during the training process, and their generalization capabilities.
Static questions are about expressivity and computational power.
Our focus is on the static, computational complexity aspects. 
There are several basic challenges in understanding the expressivity 
of ReLU networks (see~\cite{glorot2011deep,arora2018understanding,williams2018limits,
hertrich2021towards,haase2023lower,valerdi2024minimal}
and references within). Following the success of deeper (with dozens of layers) architectures in applications, there has been extensive study of the benefits of depth~\cite{eldan2016power,daniely2017depth,safran2022depth,telgarsky2016benefits,arora2018understanding} in terms of the \emph{expressive power} of neural networks.
Our focus is on understanding depth as a computational resource for \emph{exactly} representing functions~\cite{arora2018understanding}. 

Let us introduce some notation.
An \emph{affine} function is of the form $\R^m \ni x \mapsto \ip{a}{x}+b$ with $a \in \R^m$ and $b \in \R$;
the number $b$ is called the \emph{bias} term. 
If the bias term is zero, the function is called \emph{linear}. 
The inputs we work with are $x=(x_1,\ldots,x_n) \in \R^n$.
A $\relu$ network can be represented as a directed acyclic graph whose input gates are the $x_i$'s, and the inner gates  compute either an
affine function or the $\relu$ operation (see \Cref{fig:neural}). 
The depth of a $\relu$ network is the maximum number of $\relu$ gates in a directed path in it.
Note that this differs from the usage of the word ``depth'' in the majority of the literature about ReLU network expressivity, where the depth is defined as this quantity plus one. There, our notion of depth is
usually called ``the number of hidden layers''.
For a depth parameter $k\in \mathbb{N}$,
we define:
\begin{align*}
	\relu_{n,k} &:=\{f\colon\R^n\to\R : \text{$f$ is computable by a depth-$k$ $\relu$ network}\}.
\end{align*}
Affine functions belong to 
$\relu_{n,0}$ and 
$\relu_{n,k}\subseteq \relu_{n,k+1}$.
We require \emph{exact representation}; namely, for every function $f$ in $\relu_{n,k}$, there exists a $\relu$ network of depth $k$ that is equal to $f$ for every possible input in $\R^n$. Also, observe that no restriction is placed on the \emph{size} of the network computing $f$, other than that it is finite.

\begin{figure}\centering
\includegraphics[page=17]{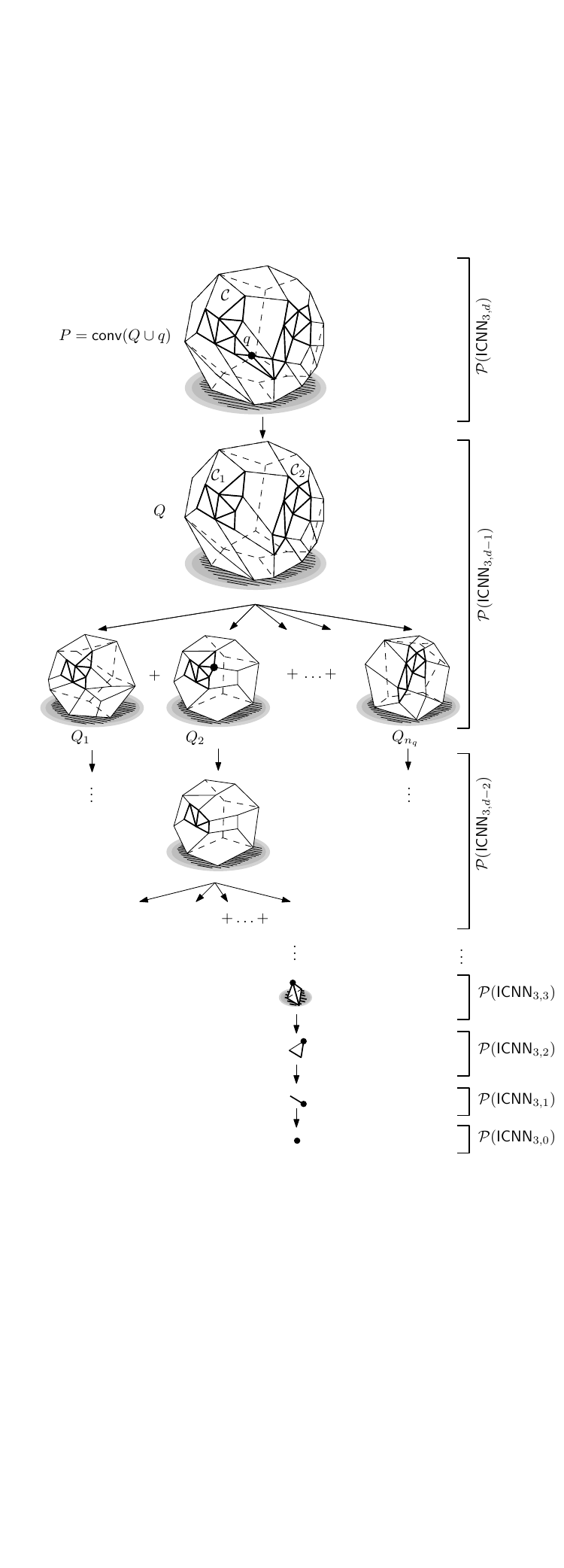}
\caption{. A depth $2$ $\relu$ neural network computing the maximum of $4$ elements.}
\label{fig:neural}
\end{figure}

When the depth $k$ is not important, we denote
by $\relu_n$ the union
$$ \relu_n := \bigcup_{k=1}^\infty \relu_{n,k}$$
and use a similar convention for the classes defined below.
$\relu$ networks compute continuous
and piecewise affine functions:\footnote{The notation $\CPWL$ is standard;
the ``L'' suggests ``linear'' but in fact the meaning is ``affine''. In this text, we always assume that the number of pieces is finite.} 
\begin{align*}
    \CPWL_n&:=\{f\colon\R^n\to\R : \text{$f$ is continuous and piecewise affine}\}.
\end{align*}

It is common in computational complexity theory that
the analysis of the general model is difficult and the main questions are open. This leads to the study of restricted models. For example, a monotone restriction has been studied in a variety of circuit models such as boolean, algebraic and threshold circuits; see e.g.~\cite{raz2011multilinear,pudlak1997lower,alon1987monotone,haastad1991power}.
We shall study two families of restricted networks, monotone $\relu$ networks ($\relu^+$s) and input convex neural networks ($\ICNN$s), which we define next.

The space $\R^m$ is partially ordered via
$x \leq y$ iff $x_i \leq y_i$ for all $i$.
A function $F : \R^n \to \R$ is \emph{monotone} if 
$F(x) \leq F(y)$ for all $x \leq y$. 
An affine function $x \mapsto \ip{a}{x}+b$ is monotone iff $a \geq 0$; the bias term can be an arbitrary real number.
A {\em monotone $\relu$ network} is a $\relu$ network in which every affine gate computes a monotone function. 
An $\ICNN$ is the same as a monotone $\relu$
network, except that gates that compute affine functions of the {\em inputs} $x_1,\ldots,x_n$
are not restricted to be monotone
(and all other affine gates
are restricted to be monotone).
In other words, the only gates that are allowed to be non-monotone in an $\ICNN$ are before the first $\relu$ gates. 
We use the notation
\begin{align*}
	\relu^+_{n,k} &:=\{f\colon\R^n\to\R :\text{$f$ is computable by a depth-$k$ monotone $\relu$ network}\},\\
    \ICNN_{n,k} &:=\{f\colon\R^n\to\R :\text{$f$ is computable by a depth-$k$ $\ICNN$ network}\} .
\end{align*}
It is straightforward that every function in $\relu^+_{n}$ is $\CPWL$, monotone and convex.
Similarly, every function in $\ICNN_{n}$ is $\CPWL$ and convex.
It is also obvious that for every $k$, we have that
$\relu^+_{n,k} \subset
\ICNN_{n,k}$ with strict inclusion.

Previous works provided motivation for studying the expressive power of these two models.
The monotone model was suggested, motivated and studied in~\cite{daniels2010monotone,sivaraman2020counterexample,mikulincer2024size,hertrich2024neural}.
$\ICNN$s were introduced, motivated and studied in~\cite{amos2017input}. They serve
as a model for studying $\relu$ networks that compute convex functions. 
$\ICNN$s were subsequently studied in many works with a wide variety of motivations; see e.g.~\cite{bunning2021input,chen2018optimal,chen2020data,bunne2023learning,hertrich2024neural}
and references therein. 

A better understanding of these models
can lead to a better understanding of the general $\relu$ model and in particular the depth requirements needed to represent  arbitrary $\CPWL$ functions.
First, the simplicity of the monotone model allows to expose more structure,
which can potentially highlight the steps we need to take 
in order to understand the general model.
Second, a general $\relu_{n,k}$ network
can be written as a difference of two $\relu^+_{n,k}$
networks; see~\cite{hertrich2024neural} and references within. 
So, a 
better understanding of the monotone model
also provides insights for the general model. 
In addition, the monotone setting leads to interesting geometric
definitions and questions.
One example is the difference between 
$\R^2$ and $\R^3$ 
exhibited by \Cref{prop:isoPlane}
and \Cref{prop:example}. 

\subsection{Monotone networks}
For a broad family of activation functions (including $\relu$s), the universal approximation theorems say that neural networks of depth one can \emph{approximate} any continuous function over a bounded domain; see e.g.~\cite{cybenko1989approximation,hornik1989multilayer} and references within. Much less is known about the depth complexity needed to \emph{exactly} compute $\CPWL$ functions.

A central function in this area is the maximum function $\MAX_n \in \CPWL_n$ defined by 
$$\MAX_n(x) = \max \{x_1,x_2,\ldots,x_n\}.$$ 
There are many reasons to study $\MAX_n$.
Most importantly, it is ``complete'' for the class of all $\CPWL$ functions as we explain next. 
Wang and Sun~\cite{wang2005generalization}
showed that every function in $\CPWL_n$
can be written as a linear combination
of $\MAX_{n+1}$ functions applied to some  affine functions; see also~\cite{arora2018understanding,hertrich2021towards}.
In particular,  if $\MAX_{n+1} \in \relu_{n+1,k}$
then $$\CPWL_n \subseteq \relu_{n+1,k}.$$ In words, the depth complexity of $\MAX_{n+1}$ is essentially equal to the depth complexity of all of $\CPWL_n$.

Because the depth complexity of $\MAX_n$ is at most $\ceil{\log_2 n}$,
we know that $\CPWL_n \subseteq \relu_{n,k}$ with $k = \ceil{\log_2 (n+1)}$.
Stated differently, any function in $\CPWL_n$ can be computed by a $\relu$ network of depth $\ceil{\log_2(n+1)}$. The question whether this upper bound on the depth is tight is currently open: it is not even known if $\CPWL_n \subseteq \relu_{n,2}$.

A central open problem in this area
is, therefore, pinpointing the $\relu$ depth complexity of $\MAX_n$; that is,
the minimum $k$ so that $\MAX_{n} \in \relu_{n,k}$.
It is conjectured that the depth complexity of $\MAX_n$
is exactly $\ceil{\log_2 n}$; 
see~\cite{arora2018understanding,hertrich2021towards,haase2023lower,grillo2025depth}.
While this conjecture was proved under certain assumptions on the weights of the neurons such as being integral~\cite{haase2023lower} or being decimal fractions~\cite{averkov2025expressiveness}, it is still possible that $\MAX_{n} \in \relu_{n,2}$ in general.

Here we study the depth requirement of functions that can be exactly represented by $\relu^+$ networks as well as $\ICNN$s. 
We start by investigating $\MAX_n$, which is monotone and convex.
Can monotone ReLU networks compute $\MAX_n$?

\begin{claim}
$\MAX_2 \not \in \relu^+_{2}$.
\end{claim}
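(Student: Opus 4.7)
The plan is to identify a structural property satisfied by every function in $\relu^+_2$ but violated by $\MAX_2$. The property I would use is \emph{supermodularity}: a function $f\colon\R^n\to\R$ is supermodular if $f(x\vee y)+f(x\wedge y)\geq f(x)+f(y)$ for all $x,y\in\R^n$, where $\vee$ and $\wedge$ denote the coordinatewise maximum and minimum.

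First I would verify that $\MAX_2$ is not supermodular by taking $x=(1,0)$ and $y=(0,1)$: here $x\vee y=(1,1)$ and $x\wedge y=(0,0)$, so $\MAX_2(x\vee y)+\MAX_2(x\wedge y)=1+0<1+1=\MAX_2(x)+\MAX_2(y)$.

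Second, I would argue by induction on the DAG structure of a $\relu^+$ network that every $f\in\relu^+_2$ is monotone and supermodular. The inputs $x_1,x_2$ are modular, hence supermodular. A monotone affine gate applied to supermodular predecessors $g_1,\ldots,g_m$ produces $\sum_i c_i g_i+b$ with $c_i\geq 0$, and supermodularity is preserved by nonnegative linear combinations and by translations. The nontrivial step is the $\relu$ gate: if $g$ is monotone and supermodular then so is $\relu(g)$. Setting $u=g(x\wedge y)$, $v=g(x\vee y)$, $s=g(x)$, $t=g(y)$, and $M=s+t$, monotonicity of $g$ yields $u\leq s,t\leq v$, and supermodularity of $g$ yields $u+v\geq M$. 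Monotonicity of $\relu$ then gives $\relu(v)\geq\relu(M-u)$, and the convexity of $\relu$ (together with the fact that $(u,M-u)$ is symmetric around $M/2$ and at least as spread out as $(s,t)$) yields $\relu(u)+\relu(M-u)\geq\relu(s)+\relu(t)$, completing the inductive step. The delicate point, and the only place the argument is not purely formal, is this step for a $\relu$ gate, where the monotonicity and supermodularity of the inner function $g$ must be combined with both the monotonicity and the convexity of $\relu$; once it is in place the proof is immediate, since every $f\in\relu^+_2$ is supermodular whereas $\MAX_2$ is not.
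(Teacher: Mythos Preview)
Your proof is correct and takes a different route from the paper's. The paper argues geometrically: the non-differentiable locus of $\MAX_2$ is the line $x_1=x_2$, whose normal $(1,-1)$ has a negative entry, whereas every function in $\relu^+_2$ has its non-differentiable locus contained in hyperplanes with nonnegative normals (equivalently, its Newton polytope has ``positive edges'', or its sub-gradient is isotonic in the sense of Lemma~\ref{lem:isoto}). You instead track supermodularity through the network, which is an elementary, purely function-value invariant requiring no sub-gradients or polytope geometry. The two invariants are closely related---for convex functions, having an isotonic gradient is essentially the differential form of supermodularity---so at a conceptual level the arguments are cousins, but your version is more self-contained. What the paper's formulation buys is that it plugs directly into the later results: the isotonic-gradient property is exactly what drives the inapproximability theorem (Theorem~\ref{thm:noApp}) and the Newton-polytope characterization in Propositions~\ref{prop:isoPlane} and~\ref{prop:example}; supermodularity on its own does not obviously yield those.

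One small remark on the inductive step you flagged as delicate: the inequality $\relu(u)+\relu(M-u)\geq\relu(s)+\relu(t)$ with $u\leq\min(s,t)\leq\max(s,t)\leq M-u$ and $s+t=M$ is just the two-point majorization inequality for convex functions. Writing $\min(s,t)=\lambda u+(1-\lambda)(M-u)$ forces $\max(s,t)=(1-\lambda)u+\lambda(M-u)$, and applying convexity of $\relu$ to each and adding gives the inequality immediately. So the step is entirely rigorous, and with that spelled out your argument is complete.
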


The claim easily follows from the
fact that the non-differentiable points of $\MAX_2$ is the line
$x_1-x_2 = 0$, whose normal 
is not monotone
but on the other hand,
the non-differentiable points
of every function in $\relu^+_{2}$ have monotone normals. 
We shall not provide a full proof, because we shall prove stronger statements below. 

Can monotone $\relu$ networks even approximate $\MAX_n$?
The question of approximating $\MAX_n$ by a $\relu$ network was studied in~\cite{safran2024many}, where they showed it can be done with a $\relu$ network of depth two and size $O(n^2)$. However, approximating $\MAX_n$ with a monotone $\relu$ network was not previously studied. The following theorem shows that they cannot. 

\begin{theorem}
There is $\eps > 0$ so that the following holds. 
For every $F \in \relu^+_{2}$,
there is $r>0$ 
so that if $x \in [0,r]^2$
is chosen uniformly at random then
$$\E |F(x)-\MAX_2(x) | > \eps.$$\end{theorem}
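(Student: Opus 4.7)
The plan is to prove a structural supermodularity property of $\relu^+_2$, contrast it with the strict submodularity of $\MAX_2$ to produce a four-point inequality, and then thicken this pointwise inequality into an $L^1$ lower bound. More precisely, I would establish the stronger quantitative statement $\E_{x \in [0, r]^2} |F(x) - \MAX_2(x)| \gtrsim r$ for every $F \in \relu^+_2$ and every $r > 0$, which implies the theorem upon taking $r$ sufficiently large.

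The structural core is the claim that every $F \in \relu^+_2$ is \emph{supermodular}:
\[
F(u \vee v) + F(u \wedge v) \;\geq\; F(u) + F(v) \qquad \text{for all } u, v \in \R^2,
\]
where $\vee$ and $\wedge$ denote coordinatewise max and min. I would prove this by induction on the structure of the monotone network. Input gates and monotone affine gates are in fact modular, since $(u \vee v) + (u \wedge v) = u + v$, and non-negative affine combinations obviously preserve supermodularity. The content of the induction is the $\relu$ step: if $g$ is supermodular and monotone, then setting $A := g(u \vee v)$, $B := g(u \wedge v)$, $C := g(u)$, $D := g(v)$, monotonicity of $g$ forces $B \leq \min(C, D) \leq \max(C, D) \leq A$ while supermodularity gives $A + B \geq C + D$. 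A short majorization/convexity argument (replacing $B$ by $B' := C + D - A$ so that $(A, B')$ majorizes $(C, D)$, then using $\relu(B) \geq \relu(B')$) then yields $\relu(A) + \relu(B) \geq \relu(C) + \relu(D)$, completing the inductive step.

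By direct computation, for $u = (s_1, s_2 + T)$ and $v = (s_1 + T, s_2)$ with $T > |s_1 - s_2|$, the submodular gap of $\MAX_2$ equals exactly $T - |s_1 - s_2|$. Writing $G := F - \MAX_2$ and combining with supermodularity of $F$ gives
\[
G(u \vee v) + G(u \wedge v) - G(u) - G(v) \;\geq\; T - |s_1 - s_2|,
\]
so by the triangle inequality the four values of $|G|$ at the corner points $(s_1, s_2), (s_1 + T, s_2), (s_1, s_2 + T), (s_1 + T, s_2 + T)$ sum to at least $T - |s_1 - s_2|$. To pass from this pointwise inequality to the integral bound, I would parametrize $(s_1, s_2, T) \in [0, r/4]^2 \times [r/2, 3r/4]$ so that all four corner points land in four disjoint subrectangles of $[0, r]^2$ and the gap is $\geq r/4$. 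Integrating the four-point inequality over the parameter space (of measure $\Theta(r^3)$) and using Fubini to bound each of the four resulting integrals by $\|G\|_{L^1([0, r]^2)} \cdot O(r)$ yields $\|G\|_{L^1([0, r]^2)} \gtrsim r^3$, hence $\E|G| \gtrsim r$ on $[0, r]^2$. The main obstacle I anticipate is the inductive $\relu$ step of the supermodularity lemma: one has to use monotonicity of $g$ essentially to control the relative positions of $A, B, C, D$, since without it the inequality $\relu(A) + \relu(B) \geq \relu(C) + \relu(D)$ can fail.
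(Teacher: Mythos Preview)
Your proposal is correct and takes a genuinely different route from the paper.

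The paper's proof sketch is essentially one sentence: for $r'$ large enough (depending on the weights of the given network), every $\relu$ gate is active on $[r',\infty)^2$, so $F$ is affine there; but $\MAX_2$ is $\Omega(r)$-far in $L^1$ from any affine function on $[r',2r']^2$, and this quarter of the square already forces $\E|F-\MAX_2|$ to be bounded below. In particular the paper's $r$ depends on $F$.

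You instead prove a structural \emph{supermodularity} lemma for all of $\relu^+_2$ by induction through the network (the only nontrivial step being $\relu$ of a monotone supermodular function, which you handle by a two-point majorization), contrast it with the strict submodularity defect $T-|s_1-s_2|$ of $\MAX_2$ on axis-aligned rectangles, and integrate. Your argument yields $\E_{[0,r]^2}|F-\MAX_2|\geq c\,r$ for \emph{every} $r>0$, which is strictly stronger: it gives the theorem with a uniform $r$ and in fact already proves the fixed-domain inapproximability that the paper only obtains later (their \Cref{thm:noApp}) via the isotonic-gradient machinery. This is not a coincidence: supermodularity is precisely the integrated form of the isotonic-gradient condition (for $C^2$ functions both say $\partial_1\partial_2 F\geq 0$), so your lemma is essentially a clean repackaging of the paper's \Cref{lem:isoto} that sidesteps sub-gradients entirely.

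What the paper's approach buys is brevity and transparency about \emph{why} the domain must grow with $F$ in this particular statement; what yours buys is a uniform quantitative bound and a proof that scales directly to the fixed-domain result without further work.
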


\begin{proof}[Proof sketch]
Fix $F \in \relu^+_{2}$.
Let $r' > 1$ be large enough
so that every gate in the network for $F$
computes an affine function
on inputs from $[r',\infty)^2$ ($r'$ may depend on the weights of $F$). For $r=2r'$,
the function $\MAX_2$ is far from any affine function on $[r',r]^2$.
\end{proof}

In the previous theorem, the domain of inapproximability depends on the specific network we consider. 
Can monotone ReLU networks approximate $\MAX_n$ over $[0,1]^n$?
Let us look on the plane for example. The domain $[0,1]^2$ can be partitioned to pieces (in fact, triangulated) with monotone normals\footnote{The normals could be for example $(1,0)$, $(0,1)$ and $(1,1)$.} 
so that $\MAX_2$ is approximated by 
a $\CPWL$ function with this pieces.
In fact, any continuous function on $[0,1]^2$ can be approximated in this way. 
So, the argument above does not seem to imply that
$\MAX_2$ cannot be approximated by a monotone $\relu$ network in the domain $[0,1]^2$.
To prove that it cannot, we identify an additional structure of monotone $\relu$ networks. 

A map between two partially ordered sets is called isotonic if
it preserves the order.
For a convex map $F : \R^n \to \R$ denote by $\partial F(x)$ 
the sub-gradient of $F$ at $x \in \R^n$:
$$\partial F(x) = \{ g \in \R^n : \forall y \in \R^n \ F(y) \geq F(x) + \ip{g}{y-x}\};$$
see \Cref{fig:newton} for an example.
Because $F$ is convex, the sub-gradient $\partial F(x)$ is non-empty and convex for all $x$.
If $F$ is differentiable at $x$ then $\partial F(x)= \{ \nabla F(x) \}$,
where $\nabla F$ is the gradient.

For two sets $A, B \subset \R^n$,
we write $A \leq B$ if for every $a \in A$, there is $b \in B$ so that
$a \leq b$, and vice versa
(for every $b \in B$,
there is $a \in A$ so that $a \leq b$).
It follows that $\leq$ is transitive
and if $A_1 \leq B_1$
and $A_2 \leq B_2$
then $A_1+A_2 \leq B_1 + B_2$,
where $+$ denotes Minkowski sum.
We say that the gradient of $F$ is {\em isotonic} if 
$$\forall x \leq y \ \ \partial F(x) \leq \partial F(y).$$
We say that the gradient of $F$ is non-negative if $\partial F(x) \geq \{0\}$ for all $x$.

In the way we set things up, functions with isotonic gradients are always convex (because we need sub-gradients).
The notion of isotonic gradients, however, makes sense for differentiable functions as well (with gradients instead of sub-gradients).
In dimension one, for a differentiable function $F$,
the function $F$ is convex iff $F$ has isotonic gradients. 
In dimension two or higher, this is no longer true; for example, the function $(x_2-x_1)^2$ on $[0,1]^2$ is convex but does not have isotonic gradients, and the function $x_1 \cdot x_2$ is not convex but it does have isotonic gradients. 

The structure of monotone $\relu$ networks we identify is summarized in the following lemma (see \cref{sec:Inapprox} for a proof).

\begin{lemma}
\label{lem:isoto}
If $F \in \relu^+_{n}$,
then the gradient of $F$ is isotonic and non-negative. 
\end{lemma}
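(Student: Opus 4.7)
The plan is to proceed by induction on the structure of a monotone $\relu$ network computing $F$. Topologically order the gates, and for each gate let $G$ denote the function it computes. I will show inductively that every such $G$ is real-valued, convex, and CPWL, and that $\partial G$ is both isotonic and non-negative; the lemma follows by applying this to the output gate. The base case is an input gate $x_i$, whose sub-gradient is the constant singleton $\{e_i\} \subset \R_{\geq 0}^n$, which is trivially non-negative and isotonic.

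For a monotone affine gate $G = \sum_i a_i G_i + b$ with $a_i \geq 0$, the sub-differential calculus for real-valued convex functions yields $\partial G(x) = \sum_i a_i \partial G_i(x)$ as a Minkowski sum (with the convention $0 \cdot \partial G_i(x) = \{0\}$). The excerpt already records that $\leq$ between sets is preserved under Minkowski sums, and scaling by $a_i \geq 0$ preserves both $\leq$ and $\geq \{0\}$, so both isotonicity and non-negativity transfer from the $G_i$ to $G$.

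The delicate case is a $\relu$ gate $G = \relu \circ H$ with $H$ satisfying the inductive hypothesis. Non-negativity of $\partial H$ forces $H$ to be monotone, so $x \leq y$ implies $H(x) \leq H(y)$. The chain rule for composition with the convex non-decreasing function $\relu$ gives
\[
\partial G(x) \;=\; \begin{cases} \partial H(x) & \text{if } H(x) > 0, \\ [0,1]\cdot \partial H(x) & \text{if } H(x) = 0, \\ \{0\} & \text{if } H(x) < 0, \end{cases}
\]
from which non-negativity of $\partial G$ is immediate. For isotonicity I would perform a case analysis on the signs of $H(x)$ and $H(y)$: when both are strictly positive the claim reduces to $\partial H(x) \leq \partial H(y)$ from the inductive hypothesis; when both are strictly negative it is trivial since $\partial G(x) = \partial G(y) = \{0\}$; and in the mixed regimes the set $\partial G(x)$ always contains $0$ while $\partial G(y)$ lies in $\R_{\geq 0}^n$, which supplies witnesses in both directions of the definition of $\leq$.

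The main obstacle I anticipate is the boundary behavior at $H(x) = 0$, where $\partial G(x) = [0,1]\cdot \partial H(x)$ becomes set-valued even when $H$ is smooth at $x$. The crux is an elementary set-theoretic fact I will need: if $S \leq T$ and $S, T \geq \{0\}$, then $[0,1]\cdot S \leq T$ and $S \leq [0,1]\cdot T$. Indeed any $s g$ with $s \in [0,1]$ and $g \in S$ is dominated by some $g' \in T$ with $g \leq g'$, via $s g \leq g \leq g'$, and the reverse direction uses the fact that $\geq \{0\}$ actually forces $0 \in S$ and $0 \in T$, giving a uniform lower witness. Checking the four boundary sub-cases with this lemma closes the $\relu$ step and hence the induction.
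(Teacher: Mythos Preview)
Your approach---structural induction on the network, with a sign-based case analysis at the $\relu$ gate---is essentially the paper's own proof; the paper merely fuses the affine and $\relu$ steps into a single inductive step and expresses $\partial(\relu(G))$ via the max rule of \Cref{clm:sub-grad} rather than a chain rule, but the content is the same.

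There is, however, a slip in your auxiliary ``set-theoretic fact.'' The assertion that $S\geq\{0\}$ forces $0\in S$ is false: under the paper's definition of $\leq$ on sets, $S\geq\{0\}$ only says that every element of $S$ is coordinatewise $\geq 0$ (and that some element of $S$ dominates $0$). For instance $S=\{(1,1)\}$ satisfies $S\geq\{0\}$ but $0\notin S$. Consequently the second half of your lemma, $S\leq[0,1]\cdot T$, fails in general---with $S=T=\{(1,1)\}$ the point $0\in[0,1]\cdot T$ has no element of $S$ below it. This does not actually damage your argument: because $H$ is monotone you never meet the sign pattern $H(x)>0$, $H(y)=0$, so in every boundary case the \emph{left} sub-gradient is either $\{0\}$ or $[0,1]\cdot\partial H(x)$, both of which genuinely contain $0$, and that is all the backward direction needs. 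Replace your lemma by the correct one-sided statement $[0,1]\cdot S\leq T$ (and note directly that $[0,1]\cdot S\leq[0,1]\cdot T$, using $0\in[0,1]\cdot S$ as the backward witness), and the proof goes through.
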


This structure allows (as the next theorem shows) to deduce that monotone $\relu$ networks can not even approximate the maximum function.
In particular, there is no sequence of monotone $\relu$ networks 
that tend to the maximum function
even in the unit square
(a similar statement holds in higher dimensional space; we focus on the plane for simplicity).
The following theorem is proved in \cref{sec:Inapprox}.

\begin{theorem}
\label{thm:noApp}
There is a constant $\eps > 0$ so that the following holds.
Let $F \in \CPWL_{2}$ be convex with isotonic gradient. 
Let $x$ be uniformly distributed in $[0,1]^2$.
Then,
$$\E |F(x) - \MAX_2(x)| > \eps.$$
\end{theorem}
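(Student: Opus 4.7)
The plan is to combine \emph{supermodularity} of $F$ (a pointwise consequence of isotonic gradients) with a quantitative \emph{failure} of supermodularity that $\MAX_2$ exhibits when averaged over the four quadrants of $[0,1]^2$.

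First I would show that any convex $F$ with isotonic gradient is supermodular: for any $x_L \leq x_R$ and $y_L \leq y_R$ in $[0,1]$,
\[
F(x_L, y_L) + F(x_R, y_R)\ \geq\ F(x_L, y_R) + F(x_R, y_L).
\]
This follows from the pointwise inequality $\partial_1 F(x, y_L) \leq \partial_1 F(x, y_R)$ (a coordinatewise consequence of isotonicity) by integrating over $x \in [x_L, x_R]$; the CPWL hypothesis enters only here, to guarantee that $F$ is absolutely continuous along horizontal segments.

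Next I would integrate this inequality over $(x_L, x_R, y_L, y_R) \in [0,1/2]\times[1/2,1]\times[0,1/2]\times[1/2,1]$. With $Q_A = [0,1/2]^2$, $Q_B = [1/2,1]^2$, $Q_C = [0,1/2]\times[1/2,1]$, and $Q_D = [1/2,1]\times[0,1/2]$, the two ``dummy'' integrations factor out and yield
\[
\int_{Q_A} F + \int_{Q_B} F\ \geq\ \int_{Q_C} F + \int_{Q_D} F.
\]
A direct computation gives $\int_{Q_A}\MAX_2 = 1/12$, $\int_{Q_B}\MAX_2 = 5/24$, and $\int_{Q_C}\MAX_2 = \int_{Q_D}\MAX_2 = 3/16$, so
\[
\int_{Q_A}\MAX_2 + \int_{Q_B}\MAX_2 - \int_{Q_C}\MAX_2 - \int_{Q_D}\MAX_2 = -\frac{1}{12}.
\]

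To finish, for each of the four quadrants one has $\bigl|\int_{Q_R}(F - \MAX_2)\bigr| \leq \int_{[0,1]^2} |F - \MAX_2| = \E\,|F(x) - \MAX_2(x)|$. Combining this with the last two displays gives $0 \leq -\frac{1}{12} + 4\,\E|F - \MAX_2|$, so $\E|F - \MAX_2| \geq 1/48$ and one may take $\eps = 1/48$. The step requiring the most care is deriving supermodularity from isotonic gradients for the non-smooth CPWL function (handled via absolute continuity on each axis-parallel segment); the rest is a clean averaging argument that quantifies the qualitative obstruction $\MAX_2(0,0) + \MAX_2(1,1) - \MAX_2(0,1) - \MAX_2(1,0) = -1 < 0$ at the corners of $[0,1]^2$.
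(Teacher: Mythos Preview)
Your proof is correct and takes a genuinely different route from the paper. The paper argues pointwise: using a one-dimensional auxiliary lemma (\Cref{clm:in-appx1}), it locates a point $x^*\in[1/4,1/2]\times[0,1/4]$ where some subgradient of $F$ has first coordinate at least $1-O(\eps)$, and a point $\tilde x\in[1/2,3/4]\times[3/4,1]$ where some subgradient has first coordinate at most $O(\eps)$; since $x^*\le\tilde x$, isotonicity forces these to be compatible, yielding $\eps\gtrsim 1/256$. You instead extract a clean \emph{global} consequence of isotonic gradients---supermodularity of $F$---and test it against the functional $F\mapsto \int_{Q_A}F+\int_{Q_B}F-\int_{Q_C}F-\int_{Q_D}F$, on which $\MAX_2$ scores $-1/12$. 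This is more elementary (no auxiliary lemma), gives the explicit constant $\eps=1/48$, and makes transparent that convexity is only used to make ``isotonic gradient'' well-defined; the actual obstruction is the failure of supermodularity for $\MAX_2$. The paper's approach, by contrast, keeps the isotonicity comparison visible at the level of individual subgradients, which ties more directly to the geometric picture of positive edges developed in \Cref{sec:NewP}. One small point worth making explicit in your write-up: for fixed $y_L\le y_R$, at every $x$ where both one-variable restrictions $F(\cdot,y_L)$ and $F(\cdot,y_R)$ are differentiable, every element of $\partial F(x,y_L)$ has the same first coordinate (namely $\partial_1 F(x,y_L)$), so the set inequality $\partial F(x,y_L)\le\partial F(x,y_R)$ indeed forces the scalar inequality you need.
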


We now know that every function that is computed by a monotone $\relu$
network is (1) monotone, (2) convex and (3) has isotonic gradient.
These three properties are necessary for being computed by 
a monotone $\relu$ network.
Are these three conditions also sufficient?
The answer depends on the dimension $n$,
as the two next propositions show (the proofs are in \Cref{sec:NewP}).

For simplicity, we focus on the homogeneous case.
A function $F : \R^n \to \R$
is homogeneous (also known as \emph{homogeneous of degree one} or \emph{positively homogeneous}) if for every $a \geq 0$ we have $F(ax) = a F(x)$.
For example, $\MAX_n$ is homogeneous. 
A $\relu$ network is \emph{homogeneous} if all affine functions in it
are linear (i.e., all bias terms are zero).
It is known that every $\relu$ network
for a homogeneous $F$ is, without loss of generality, homogeneous
(see e.g.~\cite{hertrich2021towards}).
In fact, if a $\relu$ network computes a homogeneous function, then the same network with all bias terms set to zero computes the same function.

\begin{prop}
\label{prop:isoPlane}
For $n = 2$,
if $F \in \CPWL_n$ is homogeneous, monotone, convex and
with isotonic gradient then $F \in \relu^+_{n,2}$.
\end{prop}

\begin{prop}
\label{prop:example}
For every $n \geq 3$, there is a homogeneous, monotone and convex $F \in \CPWL_n$ 
with isotonic gradient so that $F \not \in \relu^+_n$.
\end{prop}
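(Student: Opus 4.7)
The plan is to reduce to the case $n = 3$ by padding: given a counterexample $F \colon \R^3 \to \R$, the function
\[
  F'(x_1,\dots,x_n) := F(x_1,x_2,x_3) + x_4 + \cdots + x_n
\]
is homogeneous, monotone, convex, and CPWL with isotonic gradient (the linear tail contributes the constant gradient $(0,0,0,1,\dots,1)$ everywhere, which adds to each subgradient of $F$ and cannot break any of the properties), and if $F' \in \relu^+_n$ then hard-wiring $x_4 = \cdots = x_n = 0$ in any computing network produces a monotone $\relu$ network on three inputs for $F$, contradicting $F \notin \relu^+_3$. Hence it suffices to construct $F$ in dimension three.

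I would take $F$ to be the support function of a suitable polytope $P \subset \R^3_{\ge 0}$, namely $F(x) = \max_{a \in P}\langle a, x\rangle$. Convexity, homogeneity, and CPWL-ness are then automatic, and monotonicity follows from $P \subset \R^3_{\ge 0}$. The combinatorial design constraint on $P$ is that its normal fan decompose $\R^3$ into cones $R_i$ indexed by the vertices $a_i$ of $P$ so that whenever $x \le y$, $x \in R_i$, $y \in R_j$, one has $a_i \le a_j$ coordinatewise: this is precisely the isotonic-gradient condition, and its verification reduces to a finite case analysis over ordered pairs of cones using explicit inequality descriptions of each $R_i$.

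The central difficulty is showing $F \notin \relu^+_n$. A straightforward induction on depth shows that for every homogeneous $G \in \relu^+_n$ the subdifferential $\partial G(0) = \conv\{a_i\}$ lies in the smallest family of polytopes in $\R^n_{\ge 0}$ that contains every singleton $\{a\}$, $a \ge 0$, and is closed under non-negative scalar multiplication, Minkowski summation, and convex hull with the origin --- corresponding respectively to scaling a gate by a non-negative weight, non-negatively weighted sums of gates, and the $\relu$ of a homogeneous expression. The polytope $P$ is chosen to lie outside this family; the obstruction is a combinatorial invariant of its normal fan that is preserved by all three generating operations but violated by $P$. This is where the paper's isoperimetric ingredient enters, since the relevant invariant measures how ``balanced'' the triangulation induced by the fan is across the coordinate partial order, and the generated polytopes always yield balanced fans. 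The contrast with \Cref{prop:isoPlane} is precisely that in $\R^2$ the isotonic condition is rigid enough to force membership in this generated family, whereas in $\R^3$ one has genuine combinatorial room to escape it. The most delicate step, and the main obstacle, is the precise identification of this invariant and the verification that it is both preserved under the three generating operations and obstructed by the candidate~$P$.
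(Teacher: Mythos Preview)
Your reduction to $n=3$ by padding is fine, and your framework matches the paper's: take $F$ to be the support function of a polytope $P\subset\R^3_{\ge 0}$, and study the family $\cP(\relu^+_3)$ generated from non-negative points by non-negative Minkowski sums and the operation $Q\mapsto\conv(\{0\}\cup Q)$. The paper also characterises the isotonic-gradient condition cleanly as ``$P$ has positive edges'' (each edge, after orientation, lies in $\R^n_{\ge 0}$), which replaces your proposed case analysis over cones.

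The genuine gap is at the step you yourself flag as ``the main obstacle'': you neither name the polytope $P$ nor the invariant, and your guess that the obstruction is isoperimetric is wrong. The isoperimetric argument in the paper is used for a \emph{different} theorem (the $\ICNN$ depth lower bound in three dimensions); it plays no role here. The paper's actual construction is concrete and short: take the square-based pyramid
\[
P_* = \conv\{(0,0,0),(1,0,0),(0,1,0),(1,1,0),(1,1,1)\}.
\]
It has positive edges, so the associated $F$ is monotone, convex, homogeneous with isotonic gradient. The obstruction is twofold. First, $P_*$ is \emph{indecomposable} (any pyramid is), so if $P_*=\sum_j a_jQ_j$ with $a_j>0$ then some $Q_j$ is already a positive scaling of $P_*$; Minkowski-sum gates therefore never make progress. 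Second, the ``add zero'' gate does not help either: if $P_*=\conv(\{0\}\cup Q)$ with $Q\in\cP(\relu^+_3)$, then intersecting the whole construction with the plane $x_3=0$ yields a $\cP(\relu^+_2)$-construction of $Q':=Q\cap\{x_3=0\}$, and $Q'$ must contain both $(1,0)$ and $(0,1)$. A short positive-edges argument forces $(0,0)\in Q'$, hence $0\in Q$ and $Q=P_*$. Thus no finite sequence of the allowed operations can produce $P_*$ for the first time, and $F\notin\relu^+_3$.

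In short: the framework is right, but the proof hinges on a specific indecomposable polytope and a direct fixed-point argument for the ``add zero'' move, not on any fan-balancing or isoperimetric invariant.
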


Every convex $F \in \CPWL_n$
can be extended to a homogeneous
convex $H \in \CPWL_{n+1}$
so that $F$ is the restriction 
of $H$ to the hyperplane $x_{n+1}=1$.
This means that the two propositions have variants that hold in the non-homogeneous case.
The reason we focus on the homogeneous case is that the theory on Newton polytopes developed in \Cref{sec:NewP} is cleaner in this case. 

It is an interesting question to characterize the family of functions that can be represented exactly by $\relu^+_n$. We leave this question for future work.

As described above, the $\MAX_n$ function cannot be computed or even approximated by networks in $\relu^+_n$ regardless of their depth. One may wonder whether it is possible to show benefits of depth for functions that \emph{can} be computed by networks in $\relu^+_n$.  
An analogous phenomenon was studied in the context of monotone boolean circuit complexity. Following the super-polynomial lower bound for the monotone circuit complexity of the clique function~\cite{razborov1985lower}---which is believed to require super-polynomial size of arbitrary circuits---several works demonstrated the existence of monotone boolean functions that \emph{can} be computed by boolean circuits of \emph{polynomial size}, but nevertheless require monotone boolean circuits of super-polynomial size~\cite{razborov1985lower,cavalar2023constant,tardos1988gap,alon1987monotone}.
A similar statement was proved in the algebraic setting~\cite{valiant1979negation}.

We show that depth that is linear in the input dimension can be crucial for the computation of functions in $\relu^+_n$.
Towards this end we inductively define the functions
$$\f_0 = 0$$
and for $n>0$,
$$\f_n(x) = \relu(x_n + \f_{n-1}(x_1,\ldots,x_{n-1})).$$
It seems worth noting that the function $\f_n$ corresponds
to the so-called
Schl\"afli orthoscheme, see \Cref{sec:Monotone} for more details).
We compute the $\relu^+$
depth complexity of $\f_n$.

\begin{theorem}
\label{thm:LB-f}
For every $n > 0$,
$$\f_n \in \relu^+_{n,n}$$
but for every $k<n$,
$$\f_n \not \in \relu^+_{n,k}.$$
\end{theorem}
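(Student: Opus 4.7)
\textbf{Upper bound.} The recursion $\f_n = \relu(x_n + \f_{n-1})$ directly yields a monotone $\relu$ network of depth $n$: starting from $\f_0 = 0$, each step adds one monotone affine gate followed by one $\relu$ gate. Hence $\f_n \in \relu^+_{n,n}$.

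\textbf{Lower bound, setup.} Since $\f_n$ is positively $1$-homogeneous, any network computing it may be taken homogeneous, so the Newton polytope $\Newt(F) \subseteq \R^n_{\geq 0}$ is a well-defined invariant. Under the operations of a homogeneous monotone $\relu$ network, Newton polytopes evolve by Minkowski sums with nonnegative scalars (affine combinations with monotone gradients) and by ``convex hull with the origin'' (each $\relu$ gate). Following the framework of \Cref{sec:NewP}, define the \emph{Newton depth} of a polytope $P \subseteq \R^n_{\geq 0}$ as the minimum number of conv-with-origin operations needed to build $P$ from singletons via these two operations, so that $\Newt(F)$ has Newton depth at most $k$ whenever $F \in \relu^+_{n,k}$. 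The Newton polytope of $\f_n$ is the Schl\"afli orthoscheme simplex
\[
S_n := \conv\{v_0, v_1, \ldots, v_n\}, \qquad v_i = e_{n-i+1} + \cdots + e_n,
\]
whose vertices form a maximal chain under the coordinatewise order, and the lower bound reduces to proving $\text{Newton depth}(S_n) \geq n$.

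\textbf{Induction on $n$.} The base case $n = 1$ is immediate since $S_1$ is not a singleton. For the inductive step, suppose $S_n = \sum_i c_i \conv(\{0\} \cup P_i)$ with each $P_i$ of Newton depth $\leq k-1$. Since $n$-simplices are Minkowski-indecomposable, each summand $\conv(\{0\} \cup P_i)$ is homothetic to $S_n$; combining this with the non-negativity constraint $\Newt \subseteq \R^n_{\geq 0}$ and the fact that $v_0 = 0 \in S_n$ is a vertex forces the homothety to fix the origin, i.e., each summand equals $\lambda_i S_n$ for some $\lambda_i > 0$. Hence $P_i$ contains the facet $\lambda_i F = \{x \in \lambda_i S_n : x_n = \lambda_i\}$ opposite $0$, and since $P_i \subseteq \lambda_i S_n \subseteq \{x_n \leq \lambda_i\}$, the set $\lambda_i F$ is in fact a face of $P_i$ itself (the face maximizing $e_n \cdot x$). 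By the face-monotonicity of Newton depth---an easy induction on the recursive construction, using that Minkowski sum and conv-with-origin both commute with taking faces---we obtain $\text{Newton depth}(P_i) \geq \text{Newton depth}(\lambda_i F) = \text{Newton depth}(S_{n-1}) = n-1$, where the middle equality uses that $\lambda_i F = \lambda_i e_n + \lambda_i S_{n-1}$ (up to embedding) and the last is the inductive hypothesis. Therefore $k - 1 \geq n - 1$, i.e., $k \geq n$. The main obstacle is formalizing the Newton depth framework cleanly in \Cref{sec:NewP} and verifying the face-monotonicity lemma in the presence of the nonnegativity constraint; once these are in place, Minkowski-indecomposability of simplices and the chain-of-vertices structure of the orthoscheme do the rest, promoting the naive $\log_2 n$ lower bound available for general $\relu$ networks to the sharp linear bound $n$ in the monotone case.
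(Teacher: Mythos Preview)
Your proposal is correct and follows essentially the same route as the paper: identify $N(\f_n)$ as the Schl\"afli orthoscheme, use indecomposability of simplices to force one summand to be a scaled copy of the orthoscheme, then pass to the facet opposite the origin and induct on $n$ via a ``taking faces does not increase depth'' lemma. The paper carries out exactly this argument (taking the face in direction $e_1$ rather than your $e_n$, and spelling out the face operation gate-by-gate as $P \mapsto (P \cap H(P,u)) - h(P,u)u$); the only places your write-up needs tightening are the ones you already flag---making precise that ``Newton depth'' means the minimal number of conv-with-origin operations along a path, justifying why singleton summands can be dropped (since $0 \in S_n$ forces them to be $\{0\}$), and noting that conv-with-origin does not literally commute with taking the $e_n$-face but rather either disappears (when $h(Q,e_n)>0$) or persists (when $h(Q,e_n)=0$), in both cases not increasing depth.
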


This theorem, proven in \Cref{sec:Monotone}, leads to the following exponential separation between general $\relu$ networks
and monotone $\relu$ networks.

\begin{corollary}
For every $n>0$,
there is 
$$F \in \relu_{n,k} \cap \relu^+$$
for $k = O(\log n)$ so that
$$F \not \in \relu^+_{n,n-1}.$$
\end{corollary}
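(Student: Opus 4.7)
The plan is to take $F = \f_n$ and verify the three conditions: (i) $\f_n \in \relu^+$, (ii) $\f_n \not\in \relu^+_{n,n-1}$, and (iii) $\f_n \in \relu_{n,k}$ for some $k = O(\log n)$. The first two follow immediately from \Cref{thm:LB-f}, since $\f_n \in \relu^+_{n,n} \subseteq \relu^+$ and by the same theorem $\f_n \not\in \relu^+_{n,n-1}$. So the only thing to prove is the depth-$O(\log n)$ upper bound in the unrestricted model.

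The key observation is that $\f_n$ admits a clean closed form. Unfolding the recursion,
\[
\f_n(x) \;=\; \max\bigl\{\,0,\;x_n,\;x_{n-1}+x_n,\;\ldots,\;x_1+x_2+\cdots+x_n\,\bigr\},
\]
which one verifies by a straightforward induction on $n$: assuming the formula for $\f_{n-1}$, one has $\f_n(x) = \relu(x_n + \f_{n-1})$, and $\max\{0, x_n + \max\{0, x_{n-1}, \ldots, x_1+\cdots+x_{n-1}\}\}$ collapses to the claimed maximum of $n+1$ affine functions of $x$.

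Writing $s_i := x_1 + \cdots + x_i$ with $s_0 := 0$, the formula reads $\f_n(x) = \max_{0 \le i \le n}(s_n - s_i)$. Each of the $n+1$ arguments is an affine function of the input, and these affine functions can be computed by the input layer (depth $0$). It therefore suffices to compose this affine layer with a $\relu$ network computing $\MAX_{n+1}$. As noted in the introduction of the paper, the depth complexity of $\MAX_{n+1}$ is at most $\lceil \log_2(n+1)\rceil$ using the standard divide-and-conquer construction built from $\MAX_2(a,b) = \relu(a-b)+b$. Composing this with the affine layer produces a $\relu$ network of depth $k = \lceil \log_2(n+1)\rceil = O(\log n)$ computing $\f_n$ exactly, so $\f_n \in \relu_{n,k}$.

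There is no serious obstacle here: the entire content of the corollary is in \Cref{thm:LB-f}, and the remaining work is the elementary unfolding of the recursion into a maximum of $n+1$ partial sums together with the logarithmic-depth balanced $\MAX$ circuit. The only point worth double-checking is that the inductive expansion of $\f_n$ really yields \emph{all} the partial sums (and not fewer), which is immediate from the fact that the addition of $x_n$ shifts every previous argument by $x_n$ and the outer $\relu$ adds the constant $0$.
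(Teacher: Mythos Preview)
Your proof is correct and matches the paper's intended argument: the paper does not spell out a proof of the corollary, but the Newton-polytope description $N(\f_n)=\conv\{0,e_1,e_1+e_2,\ldots\}$ recorded in \Cref{sec:Monotone} is exactly the closed form you unfold, so $\f_n$ is a maximum of $n+1$ linear functions and the $O(\log n)$ depth follows from the standard $\MAX_{n+1}$ construction, while the other two items are precisely \Cref{thm:LB-f}.
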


Iterated composition has been linked before to understanding the role of depth in the expressivity of neural networks~\cite{mhaskar2017and,telgarsky2016benefits}. The functions $\f_n$ appear to be new to this study and may prove useful for further results regarding the connections between expressivity and depth.

\subsection{Input convex neural networks}
As opposed to monotone $\relu$ networks, ICNNs can compute
any $\CPWL$ convex function~\cite{chen2018optimal}; see also~\cite{huang2020convex,gagneux2025convexity}.
In particular, ICNNs can compute
$\MAX_n$ for every $n$.
Our techniques allow to compute the $\ICNN$ depth complexity of $\MAX_n$.

\begin{theorem}
\label{thm:MaxICNN}
For every $n>1$,
$$\MAX_n \in \ICNN_{n,n}$$
but for every $k < n$.
$$\MAX_n \not \in \ICNN_{n,k}.$$
\end{theorem}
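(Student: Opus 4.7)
For the upper bound $\MAX_n \in \ICNN_{n,n}$, I would proceed by induction on $n$ using the identity
\[
\MAX_n(x) = x_n + \relu\bigl(\MAX_{n-1}(x_1,\dots,x_{n-1}) - x_n\bigr).
\]
Given an $\ICNN$ of depth $n-1$ computing $\MAX_{n-1}$, I append one more $\relu$ gate whose argument is this output minus $x_n$. The only negative coefficient is on the input $x_n$, which is permissible for an $\ICNN$ since $-x_n$ can be realized by a non-monotone affine gate placed before any $\relu$ and then summed monotonely with the previous layer's output. The new $\relu$ raises the depth by one, and combining it with $x_n$ (both with coefficient $+1$) yields $\MAX_n$ at depth $n$.

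For the lower bound, which is the harder direction, my plan is to use the Newton polytope formalism developed in \Cref{sec:NewP}. Since $\MAX_n$ is homogeneous, it has a well-defined Newton polytope $\Newt(\MAX_n) = \conv(e_1,\dots,e_n) = \Delta_{n-1}$, the standard $(n-1)$-simplex. The first task is to establish an inductive description of Newton polytopes realizable by depth-$k$ $\ICNN$s: any such polytope is obtainable in $k$ rounds from singletons, where each round consists of forming a non-negatively weighted Minkowski sum of polytopes of the form $\conv(Q\cup\{0\})$ (reflecting the action of a $\relu$ gate on a function with Newton polytope $Q$) and translating by an arbitrary vector in $\R^n$ (reflecting the free linear-in-input skip contribution of the output combination).

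With this in hand, it suffices to show that $\Delta_{n-1}$ requires $n$ rounds. The key ingredient is the classical fact that simplices of dimension at least two are Minkowski-indecomposable, i.e.\ every Minkowski summand of a simplex is a homothet of the simplex. Consequently, in the last-round decomposition of $\Delta_{n-1}$, each summand $\conv(Q\cup\{0\})$ must itself be a homothet of $\Delta_{n-1}$ with $0$ as a vertex; this forces the associated $Q$ to contain a translated scaling of $\Delta_{n-2}$ as a face. An induction on $n$, with the base case where $\Delta_0$ is a single point computed by $\MAX_1(x_1)=x_1$ at depth $0$, then gives $k\geq n$.

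The main obstacle I expect is the joint execution of the last two steps: formalizing how the two operations (Minkowski sum with translation, and $\conv(\,\cdot\,\cup\{0\})$) compose across layers of an $\ICNN$, and strengthening the indecomposability step into an effective form that handles non-minimal choices of $Q$---for instance when $Q$ properly contains the opposite facet of the scaled simplex, one must still show that building $Q$ costs depth $\geq n-1$. Care will also be needed for the low-dimensional boundary case $n=2$, where $\Delta_1$ is a segment (hence decomposable) and the indecomposability argument does not apply directly, so a separate invariant finer than the Newton polytope may be required to reach the claimed depth complexity.
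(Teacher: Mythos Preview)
Your high-level plan coincides with the paper's: upper bound via the recursion $\MAX_n=x_n+\relu(\MAX_{n-1}-x_n)$, lower bound via the Newton polytope $N(\MAX_n)=\Delta_{n-1}$, indecomposability of simplices, and induction on $n$. The paper says the argument is the $\ICNN$ analogue of its proof for $\f_n$ in \Cref{sec:Monotone}.

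The genuine gap is precisely the obstacle you flag. After indecomposability hands you a summand $\conv(\{q\}\cup P_*)$ that is a positive homothet of $\Delta_{n-1}$, with $P_*\in\cP(\ICNN_{n,k-1})$, you only know that $P_*$ \emph{contains} the opposite facet (a copy of $\Delta_{n-2}$); it may be strictly larger, so the inductive hypothesis does not apply to $P_*$. The paper does not try to apply it to $P_*$. Instead it \emph{takes a face of the entire circuit}: pick the direction $u$ whose supporting hyperplane cuts out that facet, and replace every polytope $P$ occurring anywhere in the construction of $P_*$ by $(P\cap H(P,u))-h(P,u)\,u\subset u^\perp$. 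Because supporting faces distribute over Minkowski sums (\Cref{clm:supp}) and because $\conv(\{0\}\cup Q)$ maps to either $Q'$ or $\conv(\{0\}\cup Q')$ under this operation, one obtains a valid $\ICNN$ polytope construction in $u^\perp$, of depth at most $k-1$, whose output is exactly a translate of $\Delta_{n-2}$. Now the induction closes cleanly. This face-restriction of the whole network---not an analysis of $P_*$ in isolation---is the missing idea; the bare containment $\Delta_{n-2}\subseteq P_*$ does not let you bound the depth of~$P_*$. Your $n=2$ worry is also well founded: the segment $\Delta_1$ is indeed decomposable, and your own recursion with base $\MAX_1=x_1$ at depth~$0$ already lands $\MAX_n$ in $\ICNN_{n,n-1}$, so the base case and the exact constant deserve separate scrutiny.
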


A related result was proved by Valerdi~\cite{valerdi2024minimal}. He considered a polytope-construction model that corresponds to ICNN-like networks that use 
$\MAX_2$ gates instead of $\relu$ gates.
His ideas imply that in this
$\MAX_2$-variant of ICNNs,
the depth complexity of $\MAX_n$
is $\Theta(\log n)$.

There are a few works that proved $\ICNN$-depth lower bounds for some low-dimensional 
convex $\CPWL$ function.
Gagneux, Massias, Soubies and Gribonval~\cite{gagneux2025convexity} showed that there is
a convex 
$$F \in \relu_{2,2}$$
so that 
$$F \not \in \ICNN_{2,2}.$$
Valerdi~\cite{valerdi2024minimal} constructed for every $k>0$, a function 
$$F \in \ICNN_{4,d}$$ with $d \leq 2^{O(k)}$ so that 
$$F \not \in \ICNN_{4,k}.$$
This implies a strong depth separation between general depth-$3$ $\relu$ networks and $\ICNN$s.
Valerdi's construction is based on special cyclic polytopes which exist in dimension $n \geq 4$.
He left the problem of a similar construction in dimension $n=3$ open.
We solve this problem for the ICNN model (which is a weaker  model than the model Valerdi considered, as described above). 
In particular, we get a strong depth separation between general depth-$2$ $\relu$ networks and $\ICNN$s. The proof appears in \Cref{section:ICNN-lower-bound}. 

\begin{theorem}
\label{thm-icnn}
There is a constant $C>0$
so that the following holds.
For every $k > 0$,
there is 
$$F \in \ICNN_{3, d}\cap\relu_{3,2}$$
with $d \leq C k^2$
so that
$$F \not \in \ICNN_{3,k}.$$
\end{theorem}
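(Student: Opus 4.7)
The plan is to construct, for each positive integer $k$, a convex CPWL function $F_k\colon\R^3\to\R$ that lies in $\relu_{3,2}$ via a size-$O(k^2)$ network, admits an ICNN realization of depth $O(k^2)$, and cannot be represented by any ICNN of depth at most $k$. Since 3-dimensional cyclic polytopes are simplicial with only linearly many facets, Valerdi's $\R^4$ approach does not port directly to $\R^3$; I would instead engineer $F_k$ so that its two-dimensional break complex in $\R^3$ contains a planar triangulation with many interior vertices relative to its perimeter, and then invoke a discrete isoperimetric inequality for triangulations to certify the depth lower bound.

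For the construction, set $F_k(x_1,x_2,x_3)=\max\{g(x_1,x_2,x_3),\,h(x_1,x_2,x_3)\}$, where $g$ and $h$ are depth-1 ICNNs---positive linear combinations of $\relu$s of affine functions---of size $O(k^2)$ each, chosen so that the crease $\{g=h\}$ inside $\R^3$ contains a bounded planar region subdivided into $\Theta(k^2)$ triangles arranged in a $k\times k$ grid pattern. The existence of such $g$ and $h$ follows from placing the constituent $\relu$ hinge planes of $g$ and $h$ along the two coordinate directions of the intended grid, so that $g-h$ is a CPWL function whose zero set realizes the desired triangulation. Because $\max\{g,h\}=g+\relu(h-g)$ and $h-g$ is itself depth-1 in the $\relu$ (non-convex) model, the function $F_k=g+\relu(h-g)$ is a depth-2 $\relu$ network of size $O(k^2)$, so $F_k\in\relu_{3,2}$. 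For the ICNN upper bound, $F_k$ can be expressed as $\MAX_N$ of its $N=\Theta(k^2)$ linear pieces precomposed with an affine map $\R^3\to\R^N$; applying the construction from the proof of \Cref{thm:MaxICNN} then yields $F_k\in\ICNN_{3,O(k^2)}$.

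For the ICNN lower bound, the key ingredient I would establish is a structural lemma: for any $F\in\ICNN_{3,k}$, the two-dimensional break complex of $F$ has boundary complexity at most $O(k)$, in the sense that only $O(k)$ of its codimension-1 cells can cross any fixed reference sphere. The heuristic is that each ICNN layer contributes only a bounded amount of new "hinge curve" to the preceding polyhedral subdivision, because the new hinges must lie on level sets of a single convex function and therefore contribute at most one new fold curve per gate---regardless of width. Combining this with the discrete planar isoperimetric inequality (a triangulation with boundary length $\ell$ contains at most $O(\ell^2)$ interior vertices) bounds the number of interior vertices of any triangulation embedded in a depth-$k$ ICNN's break complex by $O(k^2)$. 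Scaling the construction of $F_k$ so that its embedded triangulation has strictly more interior vertices than the constant in this bound allows then forces $F_k\not\in\ICNN_{3,k}$.

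The main obstacle is the structural lemma bounding the boundary complexity of a depth-$k$ ICNN's break complex by $O(k)$. The delicate point is width-insensitivity: because ICNNs have unbounded width per layer, any argument that merely counts pieces or facets cannot separate depth $k$ from depth $k^2$, whereas a boundary-length argument grows only linearly in depth and is then amplified to a quadratic gap by the planar isoperimetric inequality. Making this rigorous will require an inductive analysis of how each ICNN layer enlarges the preceding convex CPWL function's polyhedral decomposition, tracking not just the new pieces but the way their boundaries interact with the existing complex, and combining this geometric control with the triangulation-isoperimetric input to realize the full $k$-versus-$k^2$ separation.
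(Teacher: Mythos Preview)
Your structural lemma is false, and the heuristic you offer for it is exactly backwards. A depth-$1$ ICNN $F(x)=\sum_{i=1}^N \relu(\ip{a_i}{x})$ already has $N$ hinge planes through the origin, all of which cross any reference sphere; there is no width-insensitive bound of the kind you describe. The phrase ``contribute at most one new fold curve per gate---regardless of width'' is self-contradictory: width \emph{is} the number of gates in a layer, and each $\relu$ gate introduces its own hinge. So the break complex of a depth-$k$ ICNN can have arbitrarily many cells, and no direct complexity count on that complex can separate depth $k$ from depth $k^2$.

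The paper's argument avoids this trap entirely by working not with the break complex of $F$ but with its Newton polytope $N(F)$, and by tracking a specific indecomposable feature of $N(F)$ rather than any global complexity measure. The polytope $P_r$ is built (via Steinitz' theorem) so that its $2$-faces contain a large \emph{chain of triangles}---a triangulated pseudomanifold whose dual graph is a ball $B_r$ in the triangular lattice. The crucial lemma (following Shephard) is that chains of triangles are indecomposable under Minkowski sum: if $P=\sum_j P_j$ contains the chain, then some $P_j$ already contains a positive homothet of the same chain. Hence Minkowski-sum gates are useless for destroying the chain; only ``add~$q$'' gates help, and each one removes at most the star of a single vertex from the chain. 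This reduces the depth question to a vertex-deletion game on $B_r$, and the isoperimetric inequality for $B_r$ shows that shattering it requires $\Omega(r)=\Omega(\sqrt{m})$ deletions. Your isoperimetric intuition is therefore correct, but it must be applied to the triangle chain on the boundary of the Newton polytope, not to the domain-side break complex. (Incidentally, $F\in\relu_{3,2}$ is automatic for any $F\in\CPWL_3$ since $\CPWL_n\subseteq\relu_{n,\lceil\log_2(n+1)\rceil}$; your $g+\relu(h-g)$ construction is not needed.)
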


\section{Newton polytopes}
\label{sec:NewP}
There is a deep connection between convex CPWL functions
and their Newton polytopes (see~\cite{zhang2018tropical,maclagan2021introduction,maragos2021tropical,hertrich2021towards}
and the references within).

We focus our attention on 
the set of homogeneous functions $\mathsf{HOM}$.
The reason is that the following geometric discussion is cleaner for this class of functions (a similar theory holds for the general non-homogeneous case).
Let $F \in \CPWL_n \cap \mathsf{HOM}$ be convex. The function $F$ is of the form $$F(x) = \max \{L_1(x),\ldots,L_m(x)\}$$
where $L_1,\ldots,L_m$ are linear functions $L_i(x) = \ip{v_i}{x}$, for some $v_i\in \R^n$.
The Newton polytope of $F$ is defined to be
$$N(F) = \conv( \{v_1,\ldots,v_m\})$$
where $\conv$ is the convex hull in $\R^n$ (see an example in \Cref{fig:newton}).
The function $F$ can be written as
$$F(x) = \max \{ \ip{x}{p}: p \in N(F) \};$$
it is sometimes called the support function
of the polytope $N(F)$.
It satisfies the following clean properties:\ for such functions $F_1,F_2$ and $a_1,a_2>0$,
$$N(a_1 F_1+a_2F_2) = a_1N(F_1)+a_2 N(F_2)$$
and
$$N(\relu(F_1)) = \conv (\{0\} \cup N(F_1)).$$

\begin{figure}\centering
\includegraphics[page=8]{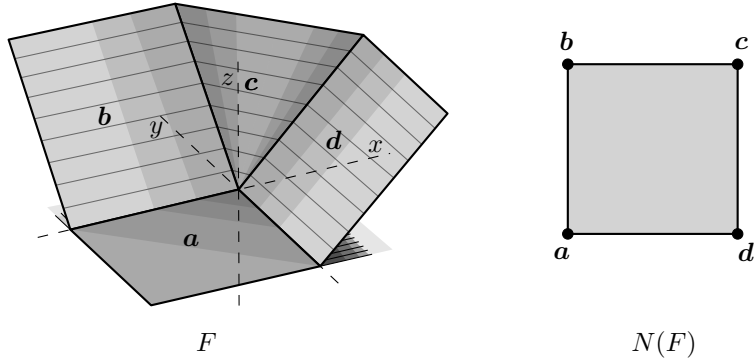}
\caption{. The graph of the function $F(x,y) = \max \{x,y,x+y,0\}$ and its Newton polytope $N(F)$ obtained as the convex hull of the four points $(0,0), (0,1), (1,0), (1,1)$. The sub-gradient $\partial F(0)$ is equal to the entire set $N(F)$.
}
\label{fig:newton}
\end{figure}

These properties translate monotone $\relu$ networks and ICNNs to convex geometry.
Instead of computing functions,
we build polytopes.
The two operations are  Minkowski sum
and ``adding zero''.
In the first layer of the computation,
we can add points in $\R^n$
(for ICNNs) and points in $\R_+^n$
(for monotone $\relu$ networks).
If we already constructed
convex polytopes $P_1,P_2,\ldots$
then with a sum operation we can construct
$$\sum_j a_j P_j$$
where $a_j > 0$.
With an ``add zero'' operation,
from a polytope $Q$, we can construct
the polytope
$$\conv(\{0\} \cup Q).$$
We get two circuit models for constructing polytopes
(which are weaker than the model considered by Valerdi~\cite{valerdi2024minimal}).
We get the following two families of polytopes
\begin{align*}
\cP ( \relu^+_{n,k})
= \{N(F) : F \in \relu^+_{n,k} \cap \mathsf{HOM} \} , \\
\cP ( \ICNN_{n,k})
= \{N(F) : F \in \ICNN_{n,k}  \cap \mathsf{HOM} \} .
\end{align*}
We also get a correspondence between the space of functions
$\relu^+_{n,k}$ and the space of polytopes $\cP(\relu^+_{n,k})$,
and between the space of functions
$\ICNN_{n,k}$ and the space of polytopes $\cP(\ICNN_{n,k})$.
Networks for functions give network for polytopes and vice versa. 

In the polytope setting, the difference between $\ICNN$s and $\relu^+$ networks 
is that the ``input points'' are from $\R^n$ and $\R^n_+$. Another difference is that in the $\ICNN$ model
the ``add zero'' operation
can be extended without increase in depth
to an ``add $q$'' operation for arbitrary points $q\in\R^n$. 
This can be seen via
$$\conv (P \cup \{q\}) = \conv ((P+\{-q\})\cup \{0\}) + \{q\}.$$
An ``add $q$'' operation
can be simulated by three operations with no increase in depth.
This observation immediately shows that
any polytope $P \subset \R^n$ with $m$ vertices belongs to $\cP(\ICNN_{n,m})$.
In other words, $\ICNN$s can compute any convex
$\CPWL$ function.

Let us demonstrate the power of this language by proving \Cref{prop:isoPlane} and 
\Cref{prop:example}.
A polytope $P \subseteq \R^n$ is the convex hull of finitely many points.
For a non-zero $u \in \R^n$ and a polytope $P \subset \R^n$,
denote by $H(P,u)$ the supporting hyperplane of $P$ in direction $u$,
and by $h(P,u)$ the support function:
$$h(P,u) = \max \{ \ip{x}{u} : x \in P\}.$$
That is, the normal to the hyperplane $H(P,u)$ is $u$ and it holds that $P \subset \{x \in \R^n : \ip{x}{u} \leq h(P,u)\}$
and that $P \cap H(P,u) \neq \emptyset$.
A set of the form $P \cap H(P,u)$ is called a face of $P$. 
The following is a standard; see e.g.~\cite{shephard1963decomposable}. We denote by $\S^{n-1}$ the standard unit sphere in $\R^n$.

\begin{fact}
\label{clm:supp}
For every $u \in \S^{n-1}$ and every polytopes $P,Q \subset \R^n$,
$$(H(P,u) \cap P) + (H(Q,u) \cap Q) = H(P+Q,u) \cap (P+Q).$$
\end{fact}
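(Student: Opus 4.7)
The plan is to prove the two inclusions separately, relying on the additivity of the support function $h(P+Q,u) = h(P,u)+h(Q,u)$, which itself is an immediate consequence of $\ip{p+q}{u} = \ip{p}{u}+\ip{q}{u}$ and the fact that $P+Q = \{p+q : p \in P, q \in Q\}$: the maximum of a sum decouples into a sum of maxima.

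For the forward inclusion, take $p \in H(P,u) \cap P$ and $q \in H(Q,u) \cap Q$. Then $\ip{p}{u} = h(P,u)$ and $\ip{q}{u} = h(Q,u)$, so $\ip{p+q}{u} = h(P,u)+h(Q,u) = h(P+Q,u)$, which together with $p+q \in P+Q$ means $p+q \in H(P+Q,u) \cap (P+Q)$.

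For the backward inclusion, let $x \in H(P+Q,u) \cap (P+Q)$. Write $x = p+q$ with $p \in P$ and $q \in Q$. Then $\ip{p}{u} \leq h(P,u)$ and $\ip{q}{u} \leq h(Q,u)$, while the sum satisfies $\ip{p}{u}+\ip{q}{u} = \ip{x}{u} = h(P+Q,u) = h(P,u)+h(Q,u)$. Both inequalities must therefore be equalities, giving $p \in H(P,u) \cap P$ and $q \in H(Q,u) \cap Q$, so $x \in (H(P,u) \cap P) + (H(Q,u) \cap Q)$.

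There is essentially no obstacle: the only step that requires a moment of care is the backward inclusion, where the choice of decomposition $x = p+q$ is \emph{a priori} not unique, but any such decomposition is automatically forced to land in the respective faces because the two pointwise inequalities sum to an equality. The whole argument is standard and self-contained given only the definitions.
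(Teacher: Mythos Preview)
Your argument is correct and is exactly the standard proof of this fact. The paper does not actually supply a proof here: it records the statement as a \emph{Fact} and simply cites Shephard~\cite{shephard1963decomposable}, so your write-up fills in precisely what the paper omits.
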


We shall use the following well-known properties of sub-gradients.

\begin{claim}
\label{clm:sub-grad}
Let $F_1,F_2:\mathbb R^n\rightarrow \mathbb \R$ be convex functions,
let $a_1,a_2 \geq 0$,  
and let $x\in\mathbb \R^n$.
The following properties hold:
\label{clm:subgradient}
\noindent\begin{enumerate}
\item 
${\partial (a_1 F_1+a_2 F_2)(x) = a_1 \partial F_1(x) + a_2 \partial F_2(x)}$.
\item If $F_1(x)=F_2(x)$
then 
$$\partial (\max \{F_1,F_2\})(x)=\conv (\partial F_1(x) \cup \partial F_2(x)).$$
\item If $F_1(x)>F_2(x)$
then 
$$\partial (\max \{F_1,F_2\})(x)= \partial F_1(x).$$

\end{enumerate}
\end{claim}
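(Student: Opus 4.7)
The plan is to establish each of the three identities by proving both inclusions, with the ``easy'' inclusion always following directly from the definition of sub-gradient and the ``hard'' inclusion coming either from a standard separation argument or from a directional-derivative computation. Throughout, I will use the defining inequality
\[
g\in\partial F(x)\iff F(y)\ge F(x)+\ip{g}{y-x}\quad\text{for all }y\in\R^n,
\]
together with the facts that convex functions are continuous on the interior of their domain and that the one-sided directional derivative $F'(x;d)=\lim_{t\to 0^+}\tfrac{F(x+td)-F(x)}{t}$ of a convex $F$ exists and satisfies $F(y)-F(x)\ge F'(x;y-x)$.

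For part (1), the inclusion ``$\supseteq$'' is immediate: if $g_i\in\partial F_i(x)$, multiply each sub-gradient inequality by $a_i\ge 0$ and add them. For ``$\subseteq$'', I would invoke the Moreau--Rockafellar sum rule, which is proved by separating the epigraphs of the two convex functions via Hahn--Banach; since $F_1,F_2$ are finite-valued, no constraint qualification is needed, and this is a textbook result I would simply cite rather than reprove.

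For part (2), write $M=\max\{F_1,F_2\}$, so $M(x)=F_1(x)=F_2(x)$. For ``$\supseteq$'', given $g=\lambda g_1+(1-\lambda)g_2$ with $g_i\in\partial F_i(x)$ and $\lambda\in[0,1]$, use $M(y)\ge \lambda F_1(y)+(1-\lambda)F_2(y)$ and substitute the two sub-gradient inequalities to obtain $M(y)\ge M(x)+\ip{g}{y-x}$. For ``$\subseteq$'', I would again appeal to a separation argument: if $g\in\partial M(x)\setminus\conv(\partial F_1(x)\cup\partial F_2(x))$, separate $g$ from the compact convex set on the right by a vector $d$ and derive a contradiction by examining the directional derivative $M'(x;d)=\max\{F_1'(x;d),F_2'(x;d)\}$, which equals $\sup\{\ip{h}{d}:h\in \partial F_1(x)\cup\partial F_2(x)\}$. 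Alternatively, this too can simply be cited as a standard fact about sub-differentials of pointwise maxima.

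For part (3), assume $F_1(x)>F_2(x)$. By continuity of $F_1-F_2$, there is a neighborhood $U$ of $x$ on which $F_1>F_2$, hence $M=F_1$ on $U$. The inclusion $\partial F_1(x)\subseteq \partial M(x)$ is immediate from $M\ge F_1$ and $M(x)=F_1(x)$. For the reverse, fix $g\in\partial M(x)$ and any $y\in\R^n$. For all sufficiently small $t>0$, the point $x+t(y-x)$ lies in $U$, so $M(x+t(y-x))=F_1(x+t(y-x))$, giving
\[
\frac{F_1(x+t(y-x))-F_1(x)}{t}\ge \ip{g}{y-x}.
\]
By convexity, the left-hand side is non-decreasing in $t>0$ and bounded above by $F_1(y)-F_1(x)$, so $F_1(y)\ge F_1(x)+\ip{g}{y-x}$, i.e. $g\in\partial F_1(x)$.

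The only real obstacle is the ``$\subseteq$'' direction in parts (1) and (2), which requires convex separation; since both are classical and the paper only needs them as bookkeeping tools for later arguments about Newton polytopes, I would quote Moreau--Rockafellar and the standard max-rule rather than reproduce the Hahn--Banach argument in full.
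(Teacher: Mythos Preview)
Your proposal is correct, but note that the paper does not prove this claim at all: it is introduced with the sentence ``We shall use the following well-known properties of sub-gradients'' and left without proof. Your outline---citing Moreau--Rockafellar for part~(1), the standard max-rule via directional derivatives and separation for part~(2), and the local-coincidence argument for part~(3)---is exactly the textbook route and would be a fine addition if a proof were required.
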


By \Cref{clm:sub-grad},
if $P \subseteq \R^n$ is a convex polytope with vertex-set $V$ and $$F(x) = \max \{ \ip{x}{p} : p \in P\} = \max \{\ip{x}{v} : v \in V\},$$
then for all $x \neq 0$,
the sub-gradient $\partial F(x)$
is the face of $P$
of the form $\partial F(x) = P \cap H(P,x)$;
see \Cref{fig:subgradient}.
The sub-gradient at zero is
$\partial F(0) = P$.

\begin{figure}\centering
\includegraphics[page=16]{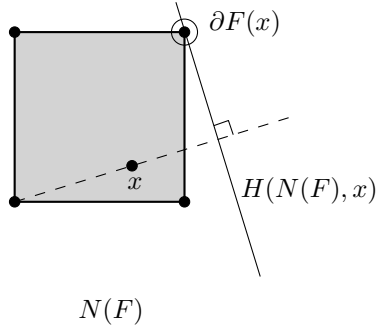}
\caption{. The sub-gradient at $x$ of the function $F=\max\{0,x_1,x_2,x_1+x_2\}$.
}
\label{fig:subgradient}
\end{figure}

We describe the isotonic-gradient property using the following language.
We say that a convex polytope $P \subset \R^n$
has \emph{positive edges} if there is a non-negative orientation of its edges;\footnote{If a polytope is a point, then it has positive edges.}
that is, if $e$ is the edge of $P$ between vertices $u$ and $v$
then either $u-v$ or $v-u$ is in $\R_+^n$. 
If there is such an orientation, then it is unique (at most one of $u-v$ and $v-u$ can be non-negative).

The following claim gives us a clean way to verify that the sub-gradient of a function is isotonic. 

\begin{claim}
\label{clm:posEiffiso}
Let $P \subseteq \R_+^n$ be a convex polytope, and let $F(x) = \max \{ \ip{x}{p} : p \in P\}$.
Then, the following conditions are equivalent:

\begin{enumerate}[(i)]
    \item $P$ has positive edges.
    \item The subgradient of $F$ is isotonic.
\end{enumerate}

\end{claim}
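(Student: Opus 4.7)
The plan is to use the identification $\partial F(x) = \arg\max_{p \in P}\ip{x}{p} = P \cap H(P,x)$ for $x\neq 0$ (with $\partial F(0)=P$), so that subgradients become faces of $P$. I treat the two implications separately; in both I pass through the normal fan of $P$ and the standard LP-style fact that two vertices whose normal cones share a wall are joined by an edge of $P$.

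For $(i)\Rightarrow(ii)$, assume $P$ has positive edges and fix $x\leq y$. The main step is the \emph{generic} case, where $\partial F(x)$ and $\partial F(y)$ are each a single vertex $\phi(x),\phi(y)$ of $P$. Parametrize the segment $x_t=(1-t)x+ty$; for generic $x,y$ the function $t\mapsto \phi(x_t)$ is piecewise constant with finitely many transitions $0<t_1<\cdots<t_k<1$, and at each transition the maximizer jumps between two adjacent vertices $v_i,v_{i+1}$ of $P$. The transition condition gives
\[
\ip{y-x}{v_{i+1}-v_i}>0,
\]
and since $y-x\in\R_+^n$ while $v_{i+1}-v_i$ is signed (by the positive edges hypothesis), we must have $v_{i+1}-v_i\in\R_+^n$. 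Telescoping yields $\phi(x)\leq\phi(y)$.

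To remove the genericity assumption, I perturb: given any vertex $v$ of $\partial F(x)$, pick $\eta$ in the interior of its normal cone $N_v$ with $x+\eps\eta$ and $y+\eps\eta$ both generic. Then $\phi(x+\eps\eta)=v$, and $\phi(y+\eps\eta)$ is (for small $\eps$) a vertex $w$ of $\partial F(y)$ by upper semicontinuity of $\arg\max$. The generic case gives $v\leq w$. Extending from vertices to arbitrary points of $\partial F(x)$ uses convex combinations (faces are convex hulls of their vertices), and the reverse direction of the set relation $\leq$ follows by the symmetric perturbation starting from a vertex $w$ of $\partial F(y)$.

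For $(ii)\Rightarrow(i)$, I prove the contrapositive. Suppose some edge $e$ of $P$ with vertices $u,v$ has $u-v$ with both a positive coordinate $i$ and a negative coordinate $j$. Pick $z$ in the relative interior of the normal cone of $e$; in a neighborhood $U$ of $z$, the only possible maximizers are $u,v$, with the sign of $\ip{\cdot}{u-v}$ choosing between them. Set
\[
x = z+\eps(u-v),\qquad y = x+\eps' e_j,
\]
with $\eps'>0$ small and $\eps>0$ much smaller (specifically $\eps\|u-v\|^2 < \eps'|(u-v)_j|$), so that $x,y\in U$. A direct computation gives $\ip{x}{u-v}=\eps\|u-v\|^2>0$ (hence $\phi(x)=u$) and $\ip{y}{u-v}=\eps\|u-v\|^2+\eps'(u-v)_j<0$ (hence $\phi(y)=v$). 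Then $y-x=\eps' e_j\geq 0$, yet $u\not\leq v$ because $u_i>v_i$, contradicting isotonicity of $\partial F$.

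The principal obstacle is the non-generic case of $(i)\Rightarrow(ii)$: ensuring that for \emph{every} vertex of $\partial F(x)$ one can find a matching vertex in $\partial F(y)$ (and vice versa). The normal-cone perturbation trick described above handles this, but it must be checked that the simultaneous perturbation of $x$ and $y$ by the same $\eta$ keeps both generic and does not move $\phi(y+\eps\eta)$ outside $\partial F(y)$; this follows from upper semicontinuity of $\arg\max$ and from choosing $\eta$ in the interior of $N_v$ while taking $\eps$ sufficiently small relative to the distance from $y$ to the other walls of its normal cell.
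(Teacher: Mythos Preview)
Your proof is correct, with one small gap to patch: in the perturbation step of $(i)\Rightarrow(ii)$ you need the entire \emph{segment} from $x+\eps\eta$ to $y+\eps\eta$ to be generic (crossing only walls of the normal fan), not just the two endpoints, since adjacency of consecutive maximizers is what lets you invoke the positive-edges hypothesis to conclude that $v_{i+1}-v_i$ is signed. This is still a full-measure condition on $\eta\in\operatorname{int}N_v$, so the fix is routine, but it should be stated.

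The route differs from the paper's mainly in how $(i)\Rightarrow(ii)$ treats non-generic $x,y$. You reduce to the generic (vertex-to-vertex) case by perturbing both points by the same $\eps\eta$ and using upper semicontinuity of $\arg\max$; the paper instead works directly with faces: along the segment, $\partial F(z)$ runs through a connected sequence of faces of $P$, and for two consecutive faces $E_+\subset E$ it orients the $1$-skeleton of $E$ by the positive-edge hypothesis to obtain a DAG, then shows that every vertex of $E$ has a sink above it (resp.\ a source below it) lying in $E_+$. The paper's argument avoids all genericity bookkeeping and gives the set inequality $E\leq E_+$ in one stroke; your approach is the more standard LP-style ``perturb to simple and telescope'' argument, which is perhaps more familiar but requires the care above. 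For $(ii)\Rightarrow(i)$ the two proofs are minor variants of the same idea: the paper constructs a single strictly positive direction $v=(p-q)+\alpha v'$ with $v'>0$ orthogonal to $p-q$ and takes $x=z-\delta v$, $y=z+\delta v$, whereas you move along a coordinate direction $e_j$ with a two-scale choice of $\eps,\eps'$; both produce $x\leq y$ with incomparable singleton subgradients.
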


\begin{proof}

\noindent\emph{(i) implies (ii)}.
Assume that $P$ has positive edges.
Let $V$ be the vertices of~$P$
so that
$F(x) = \max \{\ip{x}{v} : v \in V\}$.
Let $x \leq y$.
Our goal is to prove
that $\partial F(x) \leq \partial F(y)$. 
When we continuously move $z$
on the line segment from $x$ to $y$,
the sets $\partial F(z)$
form a connected sequence of faces of $P$. 
By the transitivity of $\leq$,
it suffices to consider two consecutive faces in this sequence. 
There are two cases to consider.
Start by considering $z \in \R^n$ and 
$u = \eps (y-x) \in \R_+^n$ be of small norm so that
 $E_+ := \partial F(z+u)$ and $E := \partial F(z)$ 
are two consecutive faces
and $E_+$ 
is a face of $E$.
By assumption, all edges in $E$
can be directed to be non-negative.
The $1$-skeleton 
of $E$ is therefore
a directed acyclic graph (DAG).
There is a sink $p$ in the graph.
For all edges $q \to p$ in $E$, because $z$ is normal to $E$,
$$0 \leq \ip{p-q}{u}
= \ip{p-q}{z+u}$$
so 
$$\ip{q}{z+u} \leq \ip{p}{z+u}.$$
This means that
$p$ is a local and hence, by convexity of $P$,
also a global maximum of $t \mapsto \ip{t}{z+u}$,
which implies that 
$p \in E_+$.
It follows that for every vertex $v$ in $E$,
there is a sink above it
in $E_+$. This can be extended via convex combinations
to all of $E$
so that
$$E \leq E_+.$$
In the second case, $E$ is a face of $E_+$ and we can use a similar argument where ``sink'' is replaced by ``source''.

\noindent\emph{(ii) implies (i)}. Assume that the gradient of $F$ is isotonic. 
Let $e=[p,q]$ be an edge of $P$.
Assume towards a contradiction that some of the entries of $p-q$
are negative and some are positive. 
It follows that there is $v' \in \R^n$ with positive entries so that $\ip{v'}{p-q}=0$. 
Let $v$ have positive entries be of the form $v=p-q+\alpha v'$ for $\alpha > 0$.
It follows that $\ip{v}{p-q}>0$.
Let $z$ be so that $\partial F(z)$ is the edge $e$.
Let $x = z-\delta v$ and $y = z+\delta v$ for small enough $\delta > 0$
so that $\partial F(x) = \{q\}$ and
$\partial F(y) = \{p\}$. 
We get a contradiction;
although $x \leq y$,
the sub-gradients at $x,y$
are incomparable. 

\end{proof}

\begin{figure}\centering
\includegraphics[page=9]{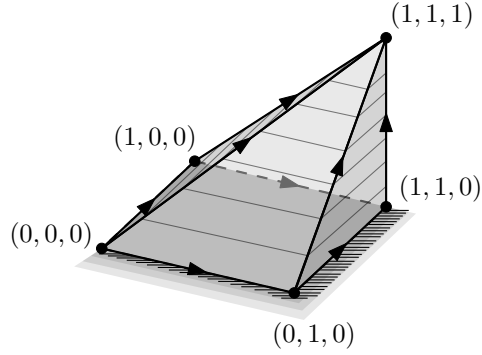}
\caption{. The square pyramid $P_*$.}
\label{fig:pyramid}
\end{figure}

A central idea in our lower bounds
is the notion of indecomposable
polytopes.
The polytopes $P ,Q\subset \R^n$ are homothetic if there are $a \geq 0$ and $b \in \R^n$ so that $P=aQ+b$.
A polytope $P$ is called indecomposable if
for all $P_1,P_2,\ldots,P_m$ so that $P = \sum_j P_j$, each $P_j$ is homothetic to $P$. Simplices are a central example of indecomposable polytopes. 

\begin{fact}[e.g.~\cite{shephard1963decomposable,gr2003unbaum}]
\label{fact:triangle}
Simplices are indecomposable.
\end{fact}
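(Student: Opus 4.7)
My plan is to prove the two-summand case $S = P_1 + P_2$, which suffices by induction on the number of summands (grouping all but one into a single summand and using that a homothetic copy of $S$ is again a simplex). Let $S$ be a $d$-simplex with vertices $v_0,\ldots,v_d$; the cases $d = 0, 1$ are trivial, so assume $d \geq 2$. For each $i$, pick a direction $\ell_i$ in the interior of the normal cone $C_i$ of $S$ at $v_i$, and let $u_i \in P_1$, $w_i \in P_2$ be the unique maximizers of $\ell_i$ on $P_1, P_2$. Applying \Cref{clm:supp} with $u = \ell_i$ gives $v_i = u_i + w_i$.

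Next, for any $i \neq j$, the segment $[v_i, v_j]$ is an edge of $S$ because $S$ is a simplex. Choosing $\ell$ in the common boundary of $C_i$ and $C_j$, \Cref{clm:supp} yields $[v_i, v_j] = F + G$ for faces $F \subseteq P_1$ and $G \subseteq P_2$. A Minkowski sum of two polytopes equals a segment only when each summand is a segment or point parallel to it; moreover, pushing $\ell$ into the interiors of $C_i$ and $C_j$ shows that $u_i$ and $u_j$ both remain maximizers of $\ell$ on $P_1$, forcing $F = [u_i, u_j]$ (possibly degenerate). Hence $u_j - u_i = \lambda_{ij}(v_j - v_i)$ for some $\lambda_{ij} \in [0,1]$, and symmetrically $w_j - w_i = (1 - \lambda_{ij})(v_j - v_i)$.

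To show the $\lambda_{ij}$ are all equal, fix a third vertex $v_k$ and combine the identities for the three edges:
$$\lambda_{ik}(v_k - v_i) = (u_k - u_j) + (u_j - u_i) = \lambda_{jk}(v_k - v_j) + \lambda_{ij}(v_j - v_i).$$
Rearranging gives $(\lambda_{ik} - \lambda_{jk})(v_k - v_i) = (\lambda_{ij} - \lambda_{jk})(v_j - v_i)$, and since $v_j - v_i$ and $v_k - v_i$ are linearly independent in a simplex, both coefficients vanish, so $\lambda_{ij} = \lambda_{jk} = \lambda_{ik}$. Chaining over all triples yields a single $\lambda \in [0,1]$ with $u_i = \lambda v_i + c$ for a fixed translate $c \in \R^n$.

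The main obstacle I expect is ensuring $P_1 = \conv\{u_0, \ldots, u_d\}$, i.e., that $P_1$ has no vertices beyond the $u_i$. I would resolve this via the normal fan: because $S = P_1 + P_2$, the normal fan of $S$ refines that of $P_1$, so every maximal cone of $P_1$'s normal fan contains at least one $C_i$, and its corresponding vertex must then be the already-identified $u_i$. This forces $P_1 = \lambda S + c$ to be homothetic to $S$, and symmetrically $P_2$ is homothetic to $S$, completing the argument.
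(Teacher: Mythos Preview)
Your proof is correct and follows the classical argument (essentially Shephard's). The paper itself does not supply a proof of \Cref{fact:triangle}; it is stated as a cited fact. That said, the paper later reuses exactly your mechanism in the proof of \Cref{lemma:chains}: it notes that when a triangular face $T$ of $P$ satisfies $P=Q+Q'$, the corresponding face $Q_u$ is a dilate $\lambda_T\cdot T$, and that all three edges of $T$ acquire the \emph{same} dilation factor $\lambda_T$ in $Q$ (referencing Shephard's proof of (12)). This is precisely your ``all $\lambda_{ij}$ are equal'' step specialized to a $2$-simplex, so your approach is fully aligned with the ideas the paper relies on.

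Two minor points worth tightening. First, in the edge step, the phrase ``$u_i$ and $u_j$ both remain maximizers of $\ell$'' is slightly loose; what you are really using is upper semicontinuity of the face map together with the fact that each $u_i$ is a \emph{vertex} of $P_1$ (it is the unique maximizer in direction $\ell_i$), so that if the segment $F$ contains both $u_i$ and $u_j$ they must be its endpoints, giving $F=[u_i,u_j]$ exactly. Second, your final normal-fan step is the right way to rule out extra vertices of $P_1$; you might state explicitly that the normal fan of $P_1+P_2$ is the common refinement of the normal fans of $P_1$ and $P_2$, which is the standard fact you are invoking.
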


\begin{proof}[Proof of \Cref{prop:example}]
Set $P_* = \conv(V)$ with
$$V = \{(0,0,0),(1,0,0),(0,1,0),(1,1,0),(1,1,1)\}.$$
The polytope $P_*$ is a pyramid with a square base (see \Cref{fig:pyramid}).
First, let us explain why $F$, the function that corresponds to $P_*$,
is homogeneous, monotone, convex and has isotonic gradients. 
It is monotone because $P_* \subset \R_+^3$.
It is homogeneous
and convex as the maximum of linear functions.
It has isotonic gradients because $P_*$ has positive edges, by \Cref{clm:posEiffiso}.

Proving that $P_*$ is not in $\cP(\relu^+_3)$ is based on 
the fact that $P_*$ is indecomposable
(see \cite[Theorem 12]{shephard1963decomposable}).
This implies that if
$P_*$ is the output of a Minkowski
sum gate, then a positive scaling of $P_*$ is also an output of a previous gate.
So, Minkowski sum gates are useless
for generating $P_*$.

Next, consider ``add zero'' gates. 
The claim is that if $P_* = \conv( \{0\} \cup Q)$ with $Q \in \cP(\relu^+_{3})$ then $Q=P_*$.
Indeed, 
if $P_* = \conv( \{0\} \cup Q)$ then
$V \setminus \{0\} \in Q$.
Denote by $E$ the $\{e_1,e_2\}$-plane,
and consider the two-dimensional polytope
$Q' = E \cap Q$. 
The sequence of $\cP(\relu^+_{3})$ gates that generate $P_*$ lead to a sequence of 
$\cP(\relu^+_{2})$ that generate $Q'$.
This can be done by replacing
the ``point gates'' as follows,
and keeping all other ``inner gates'' as is.
If the point $p = (p_1,p_2,p_3) \in \R_+^3$ appears
in the generation of $Q$,
then if $p \in E$ replace $p$ by $(p_1,p_2) \in \R_+^2$ 
and if $p \not \in E$ then delete $p$.
It follows by induction 
that if $P$ is computed by some gate for $Q$,
then the corresponding gate for $Q'$ computes
$E \cap P$.

\begin{figure}\centering
\includegraphics[page=10]{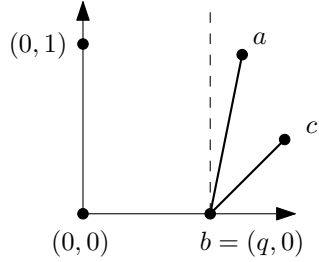}
\caption{. The structure of $Q'$.}
\label{fig:monotone01and10}
\end{figure}

The polytope $Q'$ contains $(0,1)$ and $(1,0)$.
We prove that the only way to do that in 
$\cP(\relu^+_2)$ is to also have $(0,0)$ inside $Q'$.
This completes the proof, because then $0 \in Q$ and so $Q = P_*$.
Select a vertex $b:=(q,0)$ of $E \cap Q$ for minimal $q$.
Because $(1,0) \in Q'$, we know $0\leq q \leq 1$.
Let $a,c$ be the vertices of $Q'$ adjacent to $b$; see \Cref{fig:monotone01and10}.
Because $Q' \subset \R_+^2$ has positive edges, 
we know that $a-b$ is in $\R_+^2$.
Similarly, $c-b$ is in $\R_+^2$. Because $Q'$ is convex, for every point $t \in Q'$, we know that $t-b$ is in $\R_+^2$.
In particular, for $t=(0,1)$, we have $t-b = (-q,1)$ is in $\R_+^2$. So, $q = 0$ and
$b=(0,0) \in Q'$.
\end{proof}

For the proof of \Cref{prop:isoPlane},
we use the following lemma which can be found (without a proof) in \cite[Chapter 15.1, Exercise 2]{gr2003unbaum}  
and \cite[Exercise 4-12]{yaglom1961convex}.

\begin{lemma}[]
\label{lem:planeMinsum}
Every polygon is a Minkowski sum of segments and triangles.
\end{lemma}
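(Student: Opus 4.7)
The plan is to induct on the number of edges $m$ of a convex polygon $P\subset\R^2$, using the fact that $P$ is determined up to translation by its cyclic sequence of edge vectors $v_1,\ldots,v_m$ (listed counterclockwise), which sum to zero and have strictly increasing directions $\theta_1<\cdots<\theta_m<\theta_1+2\pi$. Minkowski sum corresponds to merging two such sequences by direction and adding vectors that point the same way, so the edge count of a sum is at most the sum of the edge counts. The base cases $m\le 3$ are immediate (a point, segment, or triangle already has the desired form).

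\emph{Case A: $P$ has two antiparallel edges $v_i,v_j$} (so $v_j=-\mu v_i$ for some $\mu>0$). Set $c=\min(|v_i|,|v_j|)$, let $s$ be the segment with edge vectors $\pm c\,v_i/|v_i|$, and let $P'$ be obtained from $P$ by replacing $v_i$ with $v_i-c\,v_i/|v_i|$ and $v_j$ with $v_j+c\,v_i/|v_i|$. Then $P=s+P'$, and by the choice of $c$ one of the two altered edges vanishes, so $P'$ has $m-1$ edges and the induction applies.

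\emph{Case B: no two edges of $P$ are antiparallel.} Fix $v_1$. Because the exterior angles of a convex polygon lie in $(0,\pi)$, each open angular wedge $(\theta_p,\theta_{p+1})$ is exactly the positive cone of $v_p,v_{p+1}$, and these wedges (together with the $m$ endpoints) cover all directions. Since $-v_1$'s direction is not the direction of any edge, it lies strictly inside some wedge $(\theta_p,\theta_{p+1})$, and hence $-v_1=\alpha v_p+\beta v_{p+1}$ for some $\alpha,\beta>0$. Let $\lambda=\min(1,1/\alpha,1/\beta)$ and let $T$ be the triangle with edge vectors $\lambda v_1,\lambda\alpha v_p,\lambda\beta v_{p+1}$, which sum to zero by construction. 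Define $P'$ by replacing $v_1,v_p,v_{p+1}$ in the edge list with $(1-\lambda)v_1,(1-\lambda\alpha)v_p,(1-\lambda\beta)v_{p+1}$ and keeping the other edges. Then $P=T+P'$ and the choice of $\lambda$ makes at least one of these three replaced edges vanish, so $P'$ has at most $m-1$ edges.

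The main obstacle is verifying that $P'$ in either case is a genuine convex polygon of strictly smaller edge count. Closedness is immediate, since the removed segment or triangle has edge vectors summing to zero. Convexity and the preservation of the cyclic edge-direction order follow because the operation only shortens certain edges along their own direction, never introducing new directions. The delicate input is in Case B: one must ensure that $T$ is a nondegenerate triangle with $\alpha,\beta>0$, which is where the assumption that no edge is antiparallel to $v_1$ (combined with the convexity-induced bound $\theta_{p+1}-\theta_p<\pi$) is used. The induction then closes, expressing $P$ as a Minkowski sum of segments and triangles.
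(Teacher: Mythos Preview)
Your proof is correct and follows essentially the same induction as the paper's: peel off a segment when two edges are antiparallel, and otherwise peel off a triangle formed from a chosen edge together with the two edges on the opposite side of the polygon. Your edge-vector sequence formalism makes the verification that $P=P'+s$ and $P=P'+T$ a touch slicker than the paper's argument via supporting hyperplanes and \Cref{clm:P-Q=tran}, but the underlying geometry (and even the specific triangle extracted in Case~B) is identical.
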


\begin{figure}\centering
\includegraphics[page=11]{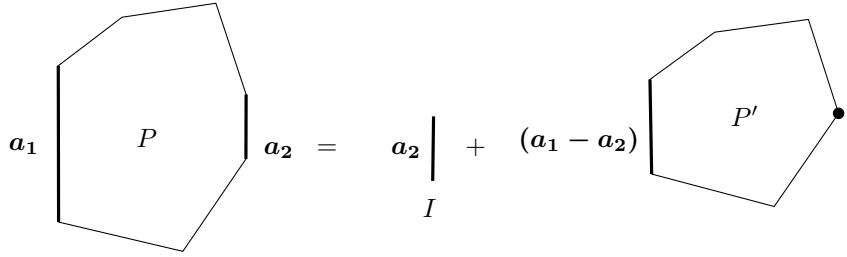}
\caption{. Representing polygon $P$ as a Minkowski sum of segment $I$ and polygon $P'$. Illustration for the case when $P$ has two parallel sides with lengths $a_1$ and $a_2$; $a_1>a_2$.}
\label{fig:decomposition-segment}
\end{figure}

\begin{figure}\centering
\includegraphics[page=12]{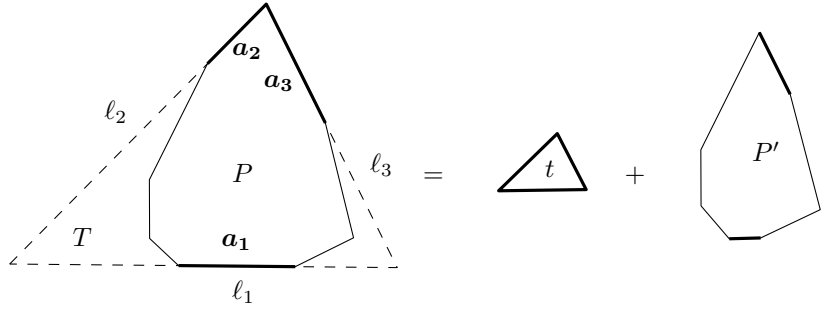}
\caption{. Representing polygon $P$ as a Minkowski sum of triangle $t$ and polygon $P'$. Polygon $P$ lies inside triangle $T$ that is homothetic to~$t$.
}
\label{fig:decomposition-triangle}
\end{figure}

To prove the lemma, we rely on the following simple claim.

\begin{claim}
\label{clm:P-Q=tran}
If $P$ and $Q$ are convex polygons in $\R^2$, and for all $u \in \S^{1}$, 
the two faces 
$H(P,u) \cap P$ and $H(Q,u) \cap Q$ differ only by translation,
then $P$ and $Q$ differ only by translation.
\end{claim}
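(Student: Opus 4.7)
My plan is to reduce the claim to the simple fact that a convex polygon, traversed counterclockwise along its boundary, is determined up to translation by its cyclic sequence of edge vectors.

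First I would extract structural information from the hypothesis. For a convex polygon $R$ and $u \in \S^1$, the face $H(R,u) \cap R$ is either a vertex (a single point) or an edge (a segment). Since a translate of an edge is an edge of the same length and direction, and a translate of a point is a point, the hypothesis forces the set
\[ N(P) := \{u \in \S^1 : H(P,u) \cap P \text{ is an edge}\} \]
to coincide with $N(Q)$. Moreover, for each $u \in N(P) = N(Q)$, the edge $H(P,u) \cap P$ and the edge $H(Q,u) \cap Q$ are translates of one another, so they are equal as vectors (same length, both orthogonal to $u$, both oriented counterclockwise around the respective boundary, since the outer normal singles out the direction). The degenerate cases where $P$ is a point or segment can be handled separately by the same sort of direct inspection.

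Next I would use the standard fact that the counterclockwise cyclic order of edges of a convex polygon is the same as the counterclockwise cyclic order of their outer normals in $\S^1$. Since $N(P) = N(Q)$, the two polygons produce the same cyclic sequence of outer normals $u_1, u_2, \ldots, u_k$ and therefore the same cyclic sequence of edge vectors $w_{u_1}, \ldots, w_{u_k}$ by the previous paragraph. Fixing an arbitrary vertex $v_0$ of $P$ between the edges with normals $u_k$ and $u_1$, and the corresponding vertex $v_0'$ of $Q$, translate $Q$ by $v_0 - v_0'$. The vertices of $P$ and of the translated $Q$ are obtained by the same partial sums $v_0 + w_{u_1} + \cdots + w_{u_i}$, so the two polygons coincide.

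I do not expect a serious obstacle: everything is bookkeeping once one observes that the face--translate hypothesis forces the edge vectors (with counterclockwise orientation) to agree face by face. The one subtle point to handle cleanly is matching a starting vertex on each side, which is why I would phrase step three as choosing the vertex that lies between the edges of two prescribed consecutive normals. The identity $\sum_i w_{u_i} = 0$, which follows because traversing a closed polygonal boundary returns to the starting point, guarantees that the construction is consistent all the way around.
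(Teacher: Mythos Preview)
The paper does not actually prove \Cref{clm:P-Q=tran}; it merely states it as a ``simple claim'' and immediately proceeds to use it in the proof of \Cref{lem:planeMinsum}. Your argument is correct and is the standard one: the hypothesis forces $P$ and $Q$ to have the same set of outer edge normals and, for each such normal, the same edge vector, and a convex polygon is determined up to translation by its cyclically ordered list of edge vectors.
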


\begin{proof}[Proof of \Cref{lem:planeMinsum}]
The proof is by induction on number of vertices
$m$ of the polytope $P$.
For $m \leq 3$ the statement is trivial, so now assume that $m \geq 4$.
There are a few cases to consider.

\medskip

\noindent
{\bf Case (a).} Suppose that $P$ has two parallel sides, with lengths $a_1$ and $a_2$;
see \Cref{fig:decomposition-segment}.
Shorten these sides by $a:=\min\{a_1,a_2\}$
to get a polytope $P'$ with fewer edges.
One of the two edges of $P$ became smaller and at least one of the edges vanished to a point. 
We can write $P$ as $P'$ plus
an interval $I$ of length $a$ that is parallel
to the edges that were contracted. 
Indeed, by \Cref{clm:supp}, for all $u \in \S^{1}$,
$$(H(P',u) \cap P') + (H(I,u) \cap I) = H(P'+I,u) \cap (P'+I).$$
If $u \in \S^1$ is orthogonal to $I$,
then $H(I,u) \cap I = I$
and 
$(H(P',u) \cap P') + I$ is a translate of $H(P,u) \cap P$.
If $u \in \S^1$ is not orthogonal to $I$,
then $H(I,u) \cap I$ is a point
and 
$H(P',u) \cap P'$ 
is a translate of $H(P,u) \cap P$.
By \Cref{clm:P-Q=tran},
we see that $P = P' + I$.

\medskip

\noindent
{\bf Case (b).} Suppose that $P$ has no parallel edges; see \Cref{fig:decomposition-triangle}. Let us select arbitrary edge $a_1$. 
Denote by $\ell_1$ the line that contains $a_1$.
Let $v$ be the vertex that is the farthest from the line $\ell_1$;
it is unique because there are no parallel edges. 
Denote by $a_2$ and $a_3$ the two edges that share the vertex $v$.
Denote by $\ell_2$ and $\ell_3$ the two lines to which the edges belong. The polytope $P$ has no parallel sides, so $\ell_1$, $\ell_2$, $\ell_3$ form a triangle $T$. 
The polytope $P$ is contained in the triangle $T$. 
Denote by $b_1,b_2,b_3$
the three edges of $T$
numbered so that $a_i$ is contained in $b_i$.
Denote by $|a|$ the length of the edge $a$.
Let $t = m T$ be a positive homothet of $T$ where
$$
m := \min\big\{\tfrac{|a_i|}{|b_i|} : i \in [3]\big\} > 0.
$$
Let $P'$ be the polytope obtained from $P$ by shortening the edge $a_i$
by $m b_i$.
The polytope $P'$ has fewer vertices than $P$.

We can write $P$ as $P' + t$. The proof of this is similar to case (a).
By \Cref{clm:supp}, for all $u \in \S^{1}$,
$$(H(P',u) \cap P') + (H(t,u) \cap t) = H(P'+I,u) \cap (P'+I).$$
For arbitrary $u \in \S^1$,
face $H(I,u) \cap t$ is a segment or a vertex. There are two cases.

If $|H(I,u) \cap t| = c > 0$ (so it is a segment of non-zero length)
then 
$c = m \cdot \tfrac{|a_i|}{|b_i|}$ for some $i \in [3]$
and $|H(P',u) \cap P'| + c$ is exactly $|H(P,u) \cap P|$ by construction of $P'$.
So, $(H(P',u) \cap P') + (H(t,u) \cap t)$ is a translate of $H(P,u) \cap P$.

If $|H(I,u) \cap t| = 0$ (so it is a point)
then $H(P',u) \cap P'$ is a translate of $H(P,u) \cap P$ by construction of $P'$.

By \Cref{clm:P-Q=tran},
we see that $P = P' + I$.
\end{proof}

\begin{proof}[Proof of \Cref{prop:isoPlane}]
We first use \Cref{clm:posEiffiso}
to translate ``isotonic gradients of $F$''
to ``positive edges of $P = N(F)$''. 
\Cref{lem:planeMinsum}
says that we can write $P$ as the Minkowski sum of segments and triangles. 
\Cref{clm:supp}
shows that these segments are positive and the edges of the triangles are positive.
It is easy to verify the proposition
for segments and triangles. For example, consider the triangle with vertices $v_1,v_2,v_3 \in \R_+^2$ so that $v_1 \leq v_2 \leq v_3$. We can first generate $e = v_2-v_1 + \conv(\{0\} \cup \{v_3-v_2 \})$ and then generate $v_1 + \conv(\{0\} \cup e)$. \end{proof}

\section{Lower bounds for $\ICNN$s}
\label{section:ICNN-lower-bound}
This section is dedicated to proving \Cref{thm-icnn}. 
We shall in fact prove the following more general statement (a polytope with $m$ vertices can be generated in depth $m$).

\begin{theorem}
\label{thm-icnn-3D}
There is a constant $C>0$ such that the following holds.
For every $m>1$, there exists a 3-dimensional polytope $P$ with at most $m$ vertices so that for all $k \leq C \sqrt{m}$, $$P\notin \cP ( \ICNN_{3,k}).$$
\end{theorem}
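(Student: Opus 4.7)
The plan is to construct, for each $m$, a simplicial polytope $P = P_m \subset \R^3$ with at most $m$ vertices whose boundary triangulation $T_m$ satisfies a discrete isoperimetric inequality, and then show that any ICNN realization of $P$ forces depth $\Omega(\sqrt{m})$.

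For $P_m$, a natural choice is the convex hull of $m$ well-distributed points on the unit sphere---for instance, the vertices arising from a geodesic subdivision of the icosahedron, or a suitable quasirandom spherical configuration. The combinatorial property needed is a discrete sphere-isoperimetric inequality on the induced triangulation $T_m$: every connected region of $a$ triangles has at least $c\sqrt{a}$ boundary edges, which follows from bounded spherical aspect ratio of the triangles of $T_m$.

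The core of the argument would be a combinatorial invariant $\phi$ on 3D polytopes, designed so that:
\begin{enumerate}
\item[(i)] $\phi(Q) \leq C_1 k^2$ for every $Q \in \cP(\ICNN_{3,k})$, proved by induction on depth using the recursive structure $Q = \sum_i c_i \conv(\{0\} \cup Q_i) + q$ with each $Q_i \in \cP(\ICNN_{3,k-1})$. The invariant is arranged so that each application of $\conv(\{0\} \cup \cdot)$ adds at most $O(k)$ to $\phi$ (the apex can interact with only $O(k)$ boundary features of the depth-$(k-1)$ base), while Minkowski sums are controlled; over $k$ layers this gives $O(k^2)$.
\item[(ii)] $\phi(P_m) \geq C_2 m$, proved by exploiting the isoperimetric expansion of $T_m$: any decomposition of $\partial P_m$ arising from a Minkowski/convex-hull-with-origin structure would partition the triangulation into regions whose combined ``boundary effort'' is $\Omega(m)$, since each region of $a$ triangles contributes $\Omega(\sqrt{a})$ boundary, and summing over any partition of $T_m$'s $\Theta(m)$ triangles gives $\Omega(m)$.
\end{enumerate}
Combining (i) and (ii) yields $k^2 \geq (C_2/C_1)\, m$, and hence $k \geq C \sqrt{m}$ as required.

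The main obstacle is designing the invariant $\phi$ correctly. It cannot simply count vertices, since depth-$2$ ICNN polytopes in $\R^3$ already include zonotopes with $\Theta(s^2)$ vertices for arbitrary $s$ (using $s$ depth-$1$ Minkowski summands). Rather, $\phi$ must capture a ``non-Minkowski-trivial'' or ``indecomposable'' portion of the boundary---one that grows at most linearly per ICNN layer, but that isoperimetrically fat triangulations force to be linear in $m$. Threading this needle between the generous expressive power of Minkowski sums in $\R^3$ and the restrictive geometry of the sphere-triangulation $T_m$ is the crux of the proof.
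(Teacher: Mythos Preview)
Your high-level plan --- build a simplicial $3$-polytope whose boundary graph has good isoperimetry, then argue that shallow ICNNs cannot realize it --- matches the paper's strategy, and the specific construction you suggest (a near-uniform triangulation of the sphere) is essentially what the paper uses (a ball in the triangular lattice, realized as a polytope via Steinitz' theorem). However, your proposal has a genuine gap precisely where you flag it: the invariant $\phi$ is never defined, and your suggested behavior for it is not right.

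The key idea you are missing is that Minkowski sums are not merely ``controlled'' --- they are \emph{free}. The paper's replacement for your $\phi$ is not a numerical invariant at all but a structural one: a \emph{chain of triangles} (a connected pseudomanifold patch of triangular faces) contained in the boundary of the polytope. The crucial lemma is that chains of triangles are indecomposable: if $P = \sum_j P_j$ contains a chain $\mathcal{C}$, then some $P_j$ contains a homothet of the \emph{entire} chain $\mathcal{C}$. This follows from the indecomposability of each triangle together with the edge-sharing that propagates a common dilation factor across the whole chain. Consequently, Minkowski-sum gates cost nothing, and the only gates that can shrink the chain are the ``add a point'' gates $\conv(\{q\}\cup Q)$, each of which can kill the triangles incident to at most one vertex. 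Tracking the chain from the output gate back to the inputs yields a vertex-deletion game on $B_r$: one vertex (and its $1$-ball) is removed per level of depth, and after a removal one may pass to any resulting connected component. The depth lower bound then reduces to showing that any such deletion strategy needs $\Omega(r)$ steps, which is exactly the isoperimetric statement $|\partial K|\ge\Omega(r)$ for balanced $K\subset V(B_r)$.

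In particular, your attempted accounting ``each $\conv(\{0\}\cup\cdot)$ adds at most $O(k)$, giving $O(k^2)$'' is both unnecessary and unjustified; the correct bookkeeping is that each such gate removes at most one vertex from the surviving chain, and Minkowski sums remove none. Once you have the chain-indecomposability lemma, the rest of your outline (isoperimetry on a sphere-like triangulation) goes through with the bound $k\ge\Omega(r)=\Omega(\sqrt{m})$ directly, without any $k^2$ step.
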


Recall that for $u \in \S^{n-1}$ and a polytope $P \subset \R^n$, we denote by $H(P,u)$ the supporting hyperplane of $P$ in direction $u$.
Additionally, we denote by $P_u$ the face of $P$ supported by $H(P,u)$.

\subsection*{Chains of triangles}

A collection $\mathcal C$ of triangles in $\R^n$ is called a \emph{chain of triangles} if it is a pseudomanifold;
that is, (1) for every two triangles $t,t'$ in $\mathcal C$,
there is a sequence $t_0,t_1,\ldots,t_m$
of triangles in $\mathcal C$ so that
$t_0 = t$, $t_m = t'$
and for every $i \in [m]$,
the two triangles $t_{i-1},t_i$
share an edge, and (2) every edge belongs to at most two triangles in $\mathcal{C}$;
see illustration in
\Cref{fig:chain}.

The collection of triangles $\mathcal K$ is a homothet of 
the collection of triangles $\mathcal C$ if $\mathcal K = a \cdot {\mathcal C}+b$
for $a \geq 0$ and $b \in \R^n$.
We sometimes call $a$ the dilation factor. If $a>0$, we say that $\mathcal{K}$ is a positive homothet of $\mathcal{C}$. We say that a polytope $P$ contains the collection of triangles $\mathcal C$ if there is a positive homothet $\mathcal{K}$ of $\mathcal{C}$ so that each triangle $T$ in $\mathcal{K}$ is a face of $P$.

We are ready for our main definition.
A collection $\mathcal C$ of triangles in $\R^n$ is called indecomposable
if for every polytope $P \subset \R^n$
that contains $\mathcal C$,
and for every polytopes $P_1,\ldots,P_m$ so that $P = \sum_j P_j$, for all $j \in [m]$, the polytope $P_j$ contains a homothetic copy of $\mathcal C$.

\begin{figure}\centering
\includegraphics[page=4]{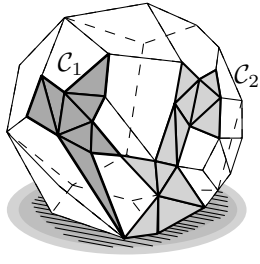}
\caption{. Two maximal chains of triangles $\mathcal C_1$ and $\mathcal C_2$. Even though $\mathcal C_1$ and $\mathcal C_2$ share two vertices, the set $\mathcal C_1 \cup \mathcal C_2$ is \emph{not} a chain of triangles.}
\label{fig:chain}
\end{figure}

Following ideas of Shephard (see proof of (12) in~\cite{shephard1963decomposable}), we get the following important lemma.
A similar property is also central in Valerdi's work~\cite{valerdi2024minimal}.

\begin{lemma}
\label{lemma:chains}
All chains of triangles are indecomposable.
\end{lemma}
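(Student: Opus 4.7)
\emph{Proof plan.} The plan is to reduce the indecomposability of the chain $\mathcal C$ to the indecomposability of its individual triangles (\Cref{fact:triangle}), and then to stitch the triangle-wise information together via the shared edges.

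Let $\mathcal K$ be a positive homothet of $\mathcal C$ all of whose triangles are faces of $P$, and fix a Minkowski decomposition $P = P_1 + \cdots + P_m$. For each triangle $T \in \mathcal K$, choose $u_T \in \S^{n-1}$ with $P_{u_T} = T$. By iterated application of \Cref{clm:supp}, $T = (P_1)_{u_T} + \cdots + (P_m)_{u_T}$, and since $T$ is a $2$-simplex, \Cref{fact:triangle} forces each summand to be a (possibly degenerate) homothet
\[
(P_j)_{u_T} = a_{j,T}\, T + b_{j,T}, \qquad a_{j,T} \ge 0, \quad \sum_j a_{j,T} = 1.
\]

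The main step is to show that $a_{j,T}$ depends only on $j$, not on $T$. Let $T, T' \in \mathcal K$ be adjacent triangles sharing an edge $e$, and let $v \in \S^{n-1}$ be a direction supporting $e$ in $P$; note that $u_T$ and $u_{T'}$ both lie on the boundary of the normal cone of $e$ in $P$. The edge of the triangle $a_{j,T}T + b_{j,T}$ that corresponds to $e$ has length $a_{j,T}|e|$, and analogously the corresponding edge of $a_{j,T'}T' + b_{j,T'}$ has length $a_{j,T'}|e|$. I would then argue that these two edges are the \emph{same} face of $P_j$: both agree with the face of $P_j$ selected by the unique cone of $P_j$'s normal fan containing the relative interior of the normal cone of $e$ in $P$ (which exists because the normal fan of $P$ refines that of each $P_j$). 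Hence $a_{j,T}|e| = a_{j,T'}|e|$, giving $a_{j,T} = a_{j,T'}$, and the edge-connectivity built into the definition of a chain extends this equality to a single scalar $a_j \ge 0$ valid for every $T \in \mathcal K$.

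To conclude, I would align the translations. For the same adjacent pair, the common edge identified above pins down $b_{j,T'} - b_{j,T}$ uniquely in terms of how $e$ sits inside $T$ and $T'$; propagating this along the edge-adjacency graph of $\mathcal K$ shows that $\{(P_j)_{u_T} : T \in \mathcal K\}$ assembles into a single translate $a_j \mathcal K + b_j \subseteq P_j$, a homothet of $\mathcal C$ (degenerating to a point when $a_j = 0$). The main obstacle I anticipate is the identification of the two edges of $P_j$ as one and the same face; the cleanest way to handle this is via the common-refinement structure of normal fans under Minkowski sum, or equivalently a continuity argument comparing $(P_j)_{(1-t)u_T + tv}$ and $(P_j)_{(1-t)u_{T'} + tv}$ for $t$ close to $1$. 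A secondary (easier) obstacle is avoiding monodromy when different edge-paths in $\mathcal K$ connect the same pair of triangles, but this is automatic since each translation constraint is intrinsic to the face poset of $P_j$, so any cycle of constraints is forced to be consistent.
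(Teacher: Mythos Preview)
Your proposal is correct and follows essentially the same route as the paper: use indecomposability of each triangle to extract a dilation factor in every summand, then propagate equality of these factors across shared edges using the pseudomanifold property, and finally align the translations via the vertex correspondence (the paper phrases the last step as $q'-q=\lambda_*(p'-p)$ for corresponding vertices, which sidesteps your monodromy worry directly). The only cosmetic difference is that the paper treats the two-summand case $P=Q+Q'$ and defers the edge-dilation step to Shephard~\cite{shephard1963decomposable}, whereas you spell out the normal-fan refinement argument for $m$ summands.
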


\begin{proof}[Proof of \Cref{lemma:chains}]
Let $T$ be a triangle in $\mathcal{C}$.
Let $P = Q + Q'$ be a polytope containing $\mathcal C$. 
Let $u \in \S^{n-1}$ be so that $T = P_u$.
\Cref{clm:supp} and \Cref{fact:triangle} tell us that $Q_u$ is a (translate of a) triangle of the form $\lambda_T \cdot T$ for some $\lambda_T \geq 0$. Similarly,
every edge $e$ in $T$ has a dilation factor $\lambda_e$ in $Q$.
All three edges $e$ of $T$ appear in $Q$ with the same dilation factor $\lambda_e = \lambda_T$; see e.g.\ the proof of (12) in~\cite{shephard1963decomposable}.

Because the chain of triangles $\mathcal{C}$ is a pseudomanifold, 
all edges in the triangle chain $\mathcal{C}$ have the same dilation factor $\lambda_*$.

It follows that if $p,p'$ are two vertices of $P$ and belong to $\mathcal{C}$ and if $q,q'$ are the two vertices of $Q$ that correspond to $p,p'$ then $q'-q = \lambda_*(p'-p)$.
It follows that a translation of $\lambda_* \mathcal{C}$ is in $Q$.

\end{proof}

\subsection*{Proof outline}

Before we provide the full proof,
which is rather technical,
we provide a high-level description.
To prove the lower bound, we identify a property of polytopes that make them ``complex''. In a nutshell, a polytope $P$ is complex if it contains a ``well connected'' chain of triangles.

We keep track of the evolution of the chain of triangles from the output gate of the network towards the input gates; see illustration in \Cref{fig:sum-chains}.
Let $P$ be some polytope with a given chain of triangles that is computed by a gate in the network. 
If $P$ is obtained as $P=\sum_j P_j$ then a positive homothet of its triangle chain is present in one of the $P_j$.
In other words, one of the $P_j$'s is as complex as $P$.
If $P = \conv(\{0\} \cup Q)$, then 
the triangle chain 
of $Q$ could be different from that of $P$, but only in one vertex. 
Again, if $P$ is complex then $Q$ should be at least somewhat complex. If $P$ is computed in an input gate then it is a point with no chain of triangles.

The lower bound is proved for a polytope $P:=P_r$ that contains a ``very well connected'' chain of triangles (it is defined below). As explained above, the network for $P$ must ``obliterate'' its chain of triangles. We prove that this must require many ``add $q$'' operations. 
For simplicity, we focus on the ``combinatorial data'' of the chain of triangles that we encode by a graph.

\begin{figure}
\centering
\includegraphics[page=1]{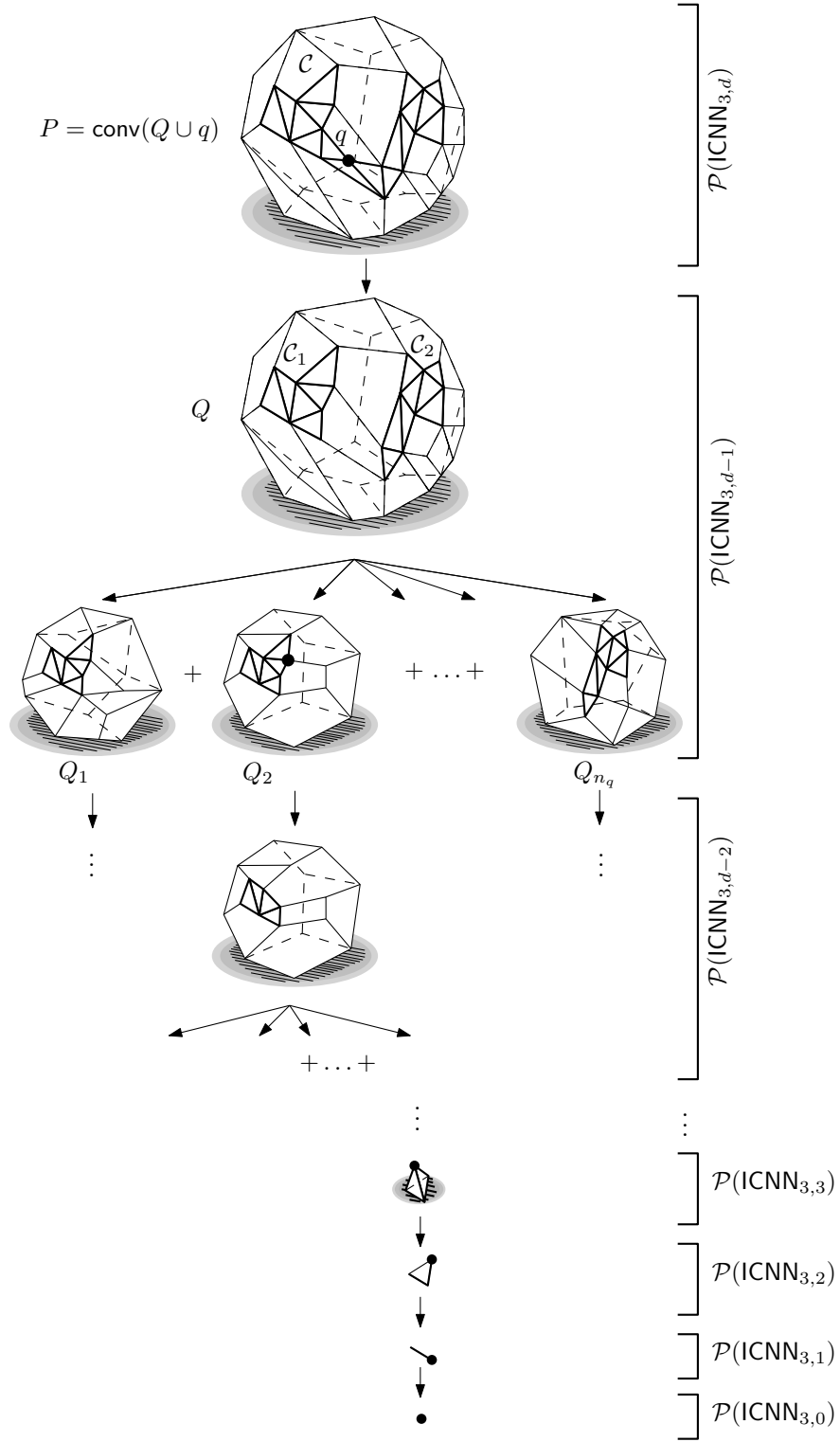}
\caption{. An example of a polytope tree.}
\label{fig:sum-chains}
\end{figure}

This leads to the following type of ``game''.\footnote{This is a single player game (a ``puzzle'').}
The game is played over a graph~$G$. The goal is to shatter the graph; break it down into single vertices. 
Deleting a vertex from the graph has a unit cost. 
If this deletion segmented the graph to a few connected components, the costs of the components are not summed;
the cost is the maximum over the cost of the components. 
A strategy in the game corresponds to a tree of deletion moves on the vertices of the graph.
The nodes in the tree are deleted vertices
and the branchings in the tree correspond to different connected components. 
The goal of the game is to shatter the graph with the minimum cost possible. 
We are interested in the cost of an optimal strategy.

The answer turns out to (mainly) depend
on the isoperimetric properties of the graph $G$. In the planar graph we use, every set of $\ell$ vertices has a boundary of size~$\Omega(\sqrt{\ell})$, which is optimal for planar graphs. 
This eventually leads to an $\Omega(\sqrt{m})$ lower bound on the cost
of the game and consequently on the depth of the network.

\subsection*{The construction of the polytope}

\begin{figure}\centering
\includegraphics[page=2]{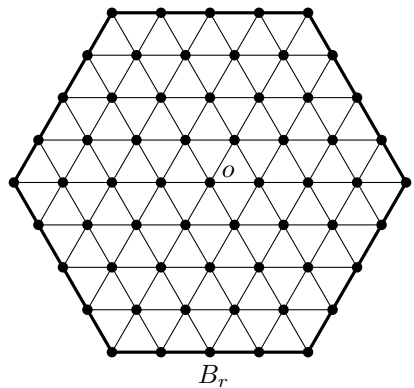}
\caption{. The graph $B_r$.}
\label{fig:G_k}
\end{figure}

We begin by building our 3-dimensional polytope $P:=P_r$. 
Consider the 2-dimensional infinite triangular lattice, where six equilateral triangles meet at each vertex.
This lattice is a planar embedding of an infinite graph $G$.
Choose a vertex $o$ of $G$, which we think of as being the ``origin''. 
For any $r\in \N$, write $B_r$ to denote the ball of radius $r$ around $o$; i.e., the subgraph of $G$ induced by the vertices at graph-distance at most $r$ from $o$, see \Cref{fig:G_k}.

We are going to use Steinitz' theorem (see e.g.~\cite[Section 13.1]{gr2003unbaum}).
A graph $G$ is \emph{$3$-connected} if removing any $2$ vertices from $G$ keeps $G$ connected.
Steinitz' theorem states that a planar graph $G$ corresponds to the vertices and edges of a $3$-dimensional polytope $P$ 
if and only if $G$ is $3$-connected.
The following claim follows by induction on $r$.

\begin{claim}
    \label{clm:3-connected}
    $B_r$ is planar and $3$-connected. 
\end{claim}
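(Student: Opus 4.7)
Planarity is immediate and requires no induction: $B_r$ is a finite induced subgraph of the infinite triangular lattice $G$, which comes equipped with a planar embedding that restricts to one for $B_r$.

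The substance of the claim is $3$-connectivity, which I plan to prove by induction on $r \geq 1$. The base case $r = 1$ is direct: $B_1$ is the wheel $W_6$, consisting of the origin $o$ together with a $6$-cycle of neighbors each joined to $o$, and one checks by inspection that removing any two of its seven vertices leaves a connected graph.

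For the inductive step, I would write $R_r := V(B_r) \setminus V(B_{r-1})$ for the ring of vertices at graph-distance exactly $r$. Two structural facts from the lattice geometry drive the argument: (i) $R_r$ induces a single cycle $C_r$ of length $6r$; and (ii) every vertex of $R_r$ has at least one neighbor in $R_{r-1}$---exactly one at the six ``corners'' of the hexagonal ring and exactly two at each of the remaining $6(r-1)$ vertices---while each vertex of $R_{r-1}$ has at most three neighbors in $R_r$. Given $a,b \in B_r$, I would split by the size of $\{a,b\} \cap R_r$. When at least one of $a,b$ lies in $R_r$, the surviving portion of $C_r$ is a path (or a pair of paths) joined by abundant surviving cross-edges to $B_{r-1}$ minus at most one vertex, and the latter is connected because $B_{r-1}$ is $3$-connected.

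The remaining case, $\{a,b\} \subset B_{r-1}$, is where I expect the real work to lie. Here $C_r$ is intact and $B_{r-1} \setminus \{a,b\}$ is connected by the inductive hypothesis, so it suffices to produce a single surviving edge from $R_r$ into $R_{r-1} \setminus \{a,b\}$. Fact (ii) implies that $\{a,b\}$ can exhaust the $R_{r-1}$-neighbors of at most six vertices of $R_r$, and since $|R_r| = 6r \geq 12$ for $r \geq 2$, the remaining $R_r$-vertices supply the required edge. The only delicacy is that a corner of $R_r$ might lose its unique $R_{r-1}$-neighbor; such a corner nevertheless retains its two cycle-neighbors on $C_r$, so it is never isolated, completing the induction.
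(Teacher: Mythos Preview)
Your proposal is correct and follows precisely the route the paper indicates: the paper's entire ``proof'' is the single sentence ``The following claim follows by induction on $r$,'' with no details supplied. Your write-up is a faithful and accurate elaboration of that induction; the structural facts (i) and (ii) about the hexagonal ring $R_r$ are correct, the case split on $|\{a,b\}\cap R_r|$ is the natural one, and the counting bound (each of $a,b$ has at most three $R_r$-neighbors, so at most six vertices of $R_r$ can have all their $R_{r-1}$-neighbors in $\{a,b\}$, while $|R_r|=6r\geq 12$) cleanly handles the only nontrivial case.
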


Set $P = P_r$ to be the polytope given for $B_r$ by Steinitz' theorem.

\begin{remark}
We present an explicit construction of the polytope $P_r$ in \Cref{fig:explicit-construction}. This natural construction is based on the inverse stereographic projection of the embedding of $B_r$ in a plane $E$ onto a sphere $S$ tangent to $E$ at the origin $o$.
\end{remark}

\begin{figure}\centering
\includegraphics[page=3]{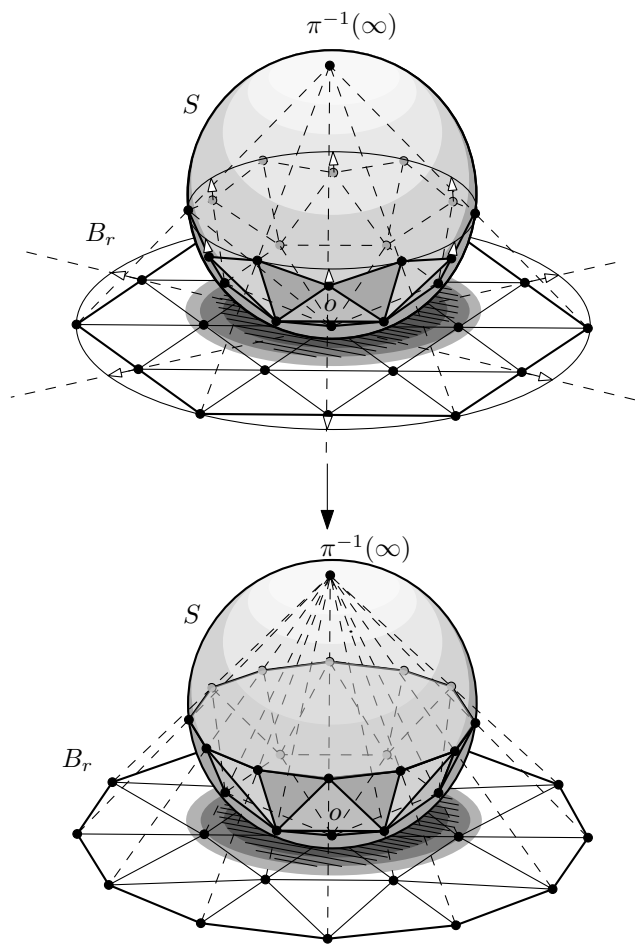}
\caption{. Building the polytope with a projection.}
\label{fig:explicit-construction}
\end{figure}

\subsection*{A coloring game}

As explained above, to prove the lower bound we can ignore some of the information about the polytopes computed by the network.
\Cref{lemma:chains} tells us that it is a good idea to focus on their chains of triangles. 
We can think of chains of triangles as graphs. 
Instead of a tree of polytopes (see the  illustration in \Cref{fig:sum-chains}), we consider a tree of graphs.
The root of the tree corresponds to the full set of vertices $V(B_r)$.
The leaves of the tree correspond to single vertices in $V(B_r)$.

\begin{figure}\centering
\includegraphics[page=5]{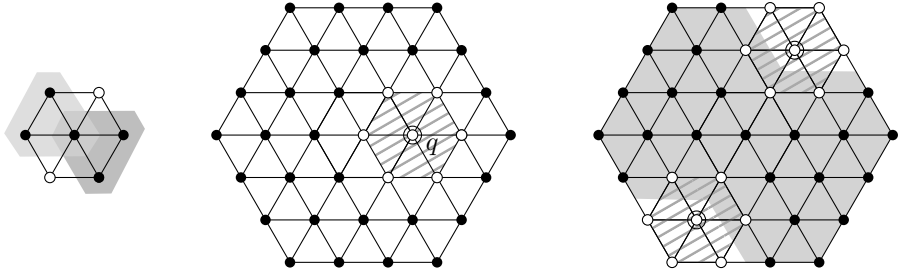}
\caption{. Balls in $B_r$.}
\label{fig:deletion-connected}
\end{figure}

\begin{algorithm}
	\caption*{$\boldmath \operatorname{color}\pmb(V_B\pmb)$} 
	\begin{algorithmic}
\If{$|V_B| \leq 1$}
\State return $|V_B|$
\Else   
\State select $q\in V_B$ \textcolor{gray}{// a black vertex}      
            \State set $V_B\coloneqq V_B\setminus B_1(q)$ \textcolor{gray}{// color $B_1(q)$ white}
            \State denote the (new) connected components of $V_B$             by $C_q^1, C_q^2,\ldots, C_q^\ell$
                \State recursively compute $c_1 = \boldmath \operatorname{color}\pmb( C_q^1\pmb)$
                \State recursively compute  $c_2 = \boldmath \operatorname{color}\pmb(C_q^2\pmb)$
                \State $\vdots$
                \State recursively compute  $c_\ell = \boldmath \operatorname{color}\pmb(C_q^\ell \pmb)$
             \EndIf
             \State return $c = 1+ \max \{c_i : i \in [\ell]\}$
	\end{algorithmic} 
\end{algorithm}

For $V_B \subseteq V(B_r)$, the game $\pmb{ \operatorname{color}(V_B)}$ is defined above.
The vertices in $V_B$ are called black vertices and the vertices not in $V_B$ are called white vertices. 
The white vertices are thought of as deleted from the graph. 
The connected components of $V_B$ are the connected components of the graph induced by $V_B.$
At each step, a black vertex is chosen and colored white (in fact, its neighborhood).
The graph is then broken into the new black connected components. 
The game continues in each component separately. 
The goal is to color all vertices white.

A strategy for the game selects the next black vertex to be colored white. 
For each strategy, the game $\pmb{ \operatorname{color}(V(B_r))}$ returns a number that we think of as the full cost of the strategy. The number
$\pmb{ \operatorname{color}(V_B)}$
is the cost when starting at $V_B$.

Each strategy for $\pmb{ \operatorname{color}(V(B_r))}$ leads to a rooted directed tree; see \Cref{fig:algorithm}.
The root corresponds to $V(B_r)$.
Each non-leaf node in the tree corresponds to some $V_B$ and the selection of $q$ for $V_B$.
From a node $V_B$, there are edges 
to the nodes $C^1_q,\ldots,C^\ell_q$.
A set $V_B$ is obtained during the execution of $\pmb{ \operatorname{color}(V(B_r))}$ with some strategy if it corresponds to some node in the tree. 
There is a unique path in the tree from the root to the node $V_B$.
We associate a set of vertices $L(V_B)$ and a set of triangles $\mathcal{C}(V_B)$ to $V_B$.
The set of vertices ($q$'s) selected by the strategy on the path from the root to~$V_B$ is denoted by $L(V_B)$.

For a subset $U$ of the vertices of $V(B_r)$, denote by $B_1(U)$ the set of all vertices of distance one from $U$,
and denote by $\mathcal{T}(U)$
the set of all triangles in $B_r$
that are contained in the graph induced by $B_1(U)$.
We think of $\mathcal{T}(U)$ both as a collection of triangles in the graph $B_r$
and as a collection of triangles in $3$-dimensional space. Each triangle in $B_r$ is embedded in $\R^3$ via the polytope $P_r$.
Every collection of triangles $\mathcal{T}(U)$ in the graph is also embedded in $\R^3$ via $P_r$. We shall think about $\mathcal{T}(U)$ also as this subset of $\R^3$ and we shall say that a polytope $Q$ contains $\mathcal{T}(U)$ if its boundary contains a positive homothet of $\mathcal{T}(U)$.

\begin{claim}
\label{clm:Cischain}
For every $U \subseteq V(B_r)$,
if the graph induced by $U$ is connected then 
$\mathcal{T}(U)$ is a chain 
of triangles.
\end{claim}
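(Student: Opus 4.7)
The plan is to verify the two defining conditions of a chain of triangles in turn. Condition (ii), that every edge lies in at most two triangles of $\mathcal{T}(U)$, is immediate: $B_r$ is a subgraph of the triangular lattice in which every edge already lies in at most two triangles, and $\mathcal{T}(U)$ is a sub-collection. All the real work is in condition (i), pseudomanifold-style edge-connectedness.

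First I would treat the backbone $\mathcal{F}_U := \{T \in \mathcal{T}(U) : V(T) \cap U \neq \emptyset\}$ of triangles containing a vertex of $U$. For each $u \in U$, the full fan of triangles incident to $u$ in $B_r$ lies in $\mathcal{F}_U$ --- the other two vertices of any such triangle are neighbors of $u$, hence in $B_1(U)$ --- and is trivially edge-connected around $u$. For adjacent $u, u' \in U$, the edge $uu'$ lies in at most two triangles of $B_r$, both in $\mathcal{F}_U$ and common to the fans of $u$ and $u'$. Since $U$ is connected as a graph, $\mathcal{F}_U$ is therefore already a single chain.

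To attach each $T \in \mathcal{T}(U) \setminus \mathcal{F}_U$ to $\mathcal{F}_U$ by an edge-path, my key lemma will say: for every vertex $v \in B_1(U)$, the triangles of $\mathcal{T}(U)$ containing $v$ form a \emph{contiguous arc} in the cyclic (respectively linear, at the boundary) order around $v$ --- no pinch point occurs at $v$. Granting this, pick any vertex $v$ of $T$; since $v \in B_1(U) \setminus U$, it has a lattice-neighbor $u \in U$, and the one or two fan-triangles at $v$ containing the edge $vu$ lie in $\mathcal{F}_U$. The contiguity of $\mathcal{T}(U)|_v$ then yields an edge-path inside the fan of $v$ connecting $T$ to such a triangle of $\mathcal{F}_U$, so $\mathcal{T}(U)$ is a single chain.

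The main obstacle will be the key lemma; the case $v \in U$ is automatic since the entire fan of $v$ in $B_r$ lies in $\mathcal{T}(U)$. For $v \in B_1(U) \setminus U$, I would list the lattice-neighbors $a_1, \ldots, a_d$ of $v$ in $B_r$ in cyclic order and set $X := \{i : a_i \in B_1(U)\}$; a fan-triangle $\{v, a_i, a_{i+1}\}$ belongs to $\mathcal{T}(U)$ exactly when $\{i, i+1\} \subseteq X$. Disconnection of $\mathcal{T}(U)|_v$ thus requires $X$ to contain at least two sub-arcs of length $\geq 2$ separated by gaps --- at least six fan-positions in total. Since $v \in B_1(U) \setminus U$, some $a_{j_0} \in U$, and because $a_{j_0-1}, a_{j_0+1}$ are themselves lattice-neighbors of $a_{j_0}$ (two consecutive neighbors of $v$ are adjacent), they lie in $B_1(U)$ whenever they exist in the fan. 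This produces a forced sub-arc of $X$ around $j_0$, of length $3$ whenever $j_0$ is interior to the fan. A short pigeonhole on the only values of $d$ occurring in $B_r$ --- namely $d = 3$ or $4$ on the boundary and $d = 6$ in the interior --- now finishes the key lemma, because in the interior case the forced length-$3$ sub-arc already leaves no room for a second separated length-$\geq 2$ sub-arc, while in the boundary case $d$ is simply too small to fit two sub-arcs of length $\geq 2$ with any gap between them.
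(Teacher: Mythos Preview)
Your argument is correct, and its skeleton coincides with the paper's: the paper's entire proof reads ``For every vertex $u \in V(B_r)$, it holds that $\mathcal{T}(\{u\})$ is a pseudomanifold. The claim follows because the graph induced by $U$ is connected.'' Your backbone $\mathcal{F}_U$ is exactly $\bigcup_{u\in U}\mathcal{T}(\{u\})$, and your Step~3 (gluing the fans along edges $uu'$) is precisely the content of that sentence.

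Where you go beyond the paper is in noticing that $\mathcal{T}(U)$ can be strictly larger than $\mathcal{F}_U$: a lattice triangle whose three vertices all lie in $B_1(U)\setminus U$ (as happens, e.g., when $U$ is the $9$-cycle surrounding a single lattice triangle) belongs to $\mathcal{T}(U)$ but to no $\mathcal{T}(\{u\})$. The paper's two lines are silent on this case; your key lemma is what closes the gap. The degree-counting proof of the lemma is sound: in the interior case the forced length-$3$ arc around $j_0$ is genuinely needed (without it, $X=\{1,2,4,5\}$ on $\mathbb{Z}/6\mathbb{Z}$ would produce a disconnected local fan), and with it the arithmetic $3+1+2+1>6$ rules out a second separated arc; in the boundary case $d\le 4$ on a linear order, and two length-$\ge 2$ arcs plus a gap already need $5$ positions. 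So your write-up is best viewed as a careful completion of the paper's sketch rather than a different route.
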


\begin{proof}
For every vertex $u \in V(B_r)$,
it holds that $\mathcal{T}(\{u\})$ is a pseudomanifold. The claim follows because the graph induced by $U$ is connected.

\end{proof}

Next, we explain how $\ICNN$s lead
to strategies.

\begin{lemma}
    \label{lemma:network-alg}
Let $V_B \subseteq V(B_r)$ be a set 
that is obtained during the execution of $\pmb{ \operatorname{color}(V(B_r))}$ with some strategy. 
If $Q \in \cP(\ICNN_{3,k})$ is a polytope that contains $\mathcal C = \mathcal{C}(V_B)$ then there is a strategy so that
$${\boldsymbol{\operatorname{color}\pmb(V_B \pmb)}} \leq k.$$
\end{lemma}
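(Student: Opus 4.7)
The plan is to prove the lemma by induction on the depth $k$. For the base case $k = 0$, the polytope $Q$ is a single point in $\R^3$, which cannot contain a positive homothet of any non-empty chain of triangles. Since every vertex of $B_r$ belongs to at least one triangle of the lattice, the only $V_B$ compatible with the hypothesis is $V_B = \emptyset$, yielding cost $0 \leq k$.

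For the inductive step, suppose the lemma holds for all depths strictly less than $k$, and let $Q$ contain $\mathcal{C}(V_B)$. The idea is to trace the network backwards from its output gate and build the coloring strategy on the fly. If the output gate is a Minkowski sum $Q = \sum_j a_j Q_j$, then by \Cref{lemma:chains} at least one summand $Q_j$ contains a positive homothet of the indecomposable chain $\mathcal{C}(V_B)$, and we recurse into $Q_j$. Since sums do not consume depth in the $\ICNN$ polytope model, iterating this reduction eventually yields either an input point (handled as in the base case) or an ``add $q$'' gate $Q = \conv(Q' \cup \{q\})$ with $Q' \in \cP(\ICNN_{3,k-1})$.

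For the ``add $q$'' case, we split on whether $q$ occupies the position of a vertex of the positive homothet of $\mathcal{C}(V_B)$ embedded in $Q$. If it does not, every triangle of $\mathcal{C}(V_B)$ is already a face of $Q'$, so the inductive hypothesis yields a strategy of cost $\leq k-1 \leq k$ for $V_B$. Otherwise, let $v \in V_B$ be the vertex of $B_r$ associated with $q$. Triangles in $\mathcal{T}(V_B \setminus B_1(v))$ involve only vertices at graph distance $\geq 2$ from $v$ in $B_r$, so in the polytopal embedding they avoid $q$; they are therefore unaffected by the ``add $q$'' operation and persist as faces of $Q'$. By \Cref{clm:Cischain}, for each connected component $C^i$ of $V_B \setminus B_1(v)$ the triangles $\mathcal{T}(C^i)$ form a chain contained in $Q'$, and the inductive hypothesis produces a strategy of cost $\leq k-1$ on each $C^i$. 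Coloring $v$ first and then running these recursive strategies on the components yields total cost $1 + \max_i(k-1) = k$.

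The main obstacle will be carefully justifying that $\mathcal{T}(V_B \setminus B_1(v))$ survives intact as a collection of faces of $Q'$ after the ``add $q$'' operation; this is precisely why the game must delete the entire closed neighborhood $B_1(v)$ rather than only the vertex $v$, since triangles incident to a vertex adjacent to $v$ could be disrupted when a point is added at the position of $q$ in $\R^3$. Additional bookkeeping is needed regarding the correspondence between the combinatorial vertex $v \in V(B_r)$ and its geometric image in the embedded chain in $Q$, and regarding positive dilation factors, so that ``contains a chain'' is preserved consistently through both kinds of gates.
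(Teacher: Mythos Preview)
Your proof follows the same approach as the paper's: induct on $k$, pass through Minkowski-sum gates using \Cref{lemma:chains}, and at an ``add $q$'' gate either recurse directly into $Q'$ (when $q$ is not a chain vertex) or select the corresponding lattice vertex $v$ and recurse on the components of $V_B\setminus B_1(v)$. One small correction: triangles in $\mathcal{T}(V_B\setminus B_1(v))$ can involve vertices at graph distance exactly~$1$ from $v$ (a vertex $w$ adjacent both to $v$ and to some $u\in V_B\setminus B_1(v)$ lies in $B_1(V_B\setminus B_1(v))$), so the ``distance $\geq 2$'' claim is not quite right---what is true, and all you need, is that such triangles never involve $v$ itself, which is precisely what guarantees they persist as faces of~$Q'$.
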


\begin{proof}
The strategy is built by starting at the output gate of the network for $Q$ and going down the network; see \Cref{fig:sum-chains}.
(1) If $k=0$, then $Q$ is a point and $\mathcal{C}$ is empty. 
(2) If the output gate is a Minkowski sum gate, then by \Cref{clm:Cischain} and \Cref{lemma:chains} we know that one of the summands $Q_j$ contains $\mathcal C$ and we go down to this gate and apply induction (without making any selection in the strategy). (3) If the output gate is $\conv(\{q\} \cup Q')$, then there are two cases.
(3a) If $q$ is not a vertex in $\mathcal C$, then again go down to $Q'$ and select nothing.
(3b) Otherwise, the strategy selects $q$,
and applies induction with the network for $Q'$. 

It remains to justify the inductive step.
In case (2), we know we explained why $Q_j$ contains $\mathcal{C}$. In case (3a), the polytope $Q'$ contains $\mathcal{C}$ because the vertices of $\mathcal{C}$ are vertices of $Q'$.
In case (3b), let $C_q^1,\ldots,C_q^\ell$ be the connected components of $V_B \setminus B_1(q)$.
By \Cref{clm:Cischain}, each component $C_q^i$ defines a chain of triangles $\mathcal{C}^i$ which appears in $Q'$. 
\end{proof}

The proof of the depth lower bound thus reduces to proving the following lemma.

\begin{lemma}
    \label{lemma:running-time}
    For any coloring strategy,    $${\boldsymbol{\operatorname{color}\pmb(V(B_r) \pmb)}} \geq C r$$
    for some universal constant $C>0$.
\end{lemma}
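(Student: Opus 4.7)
The plan is to exhibit, for any strategy, a root-to-leaf path in its strategy tree of length $\Omega(r)$; since the game value equals the depth of the strategy tree, this yields the bound. Write $n = |V(B_r)| = 3r^2 + 3r + 1$, fix a strategy, and construct the path $V_{B,0} = V(B_r), V_{B,1}, \ldots, V_{B,D}$ by always descending to a largest child: $V_{B,j}$ is a largest connected component of $V_{B,j-1} \setminus B_1(q_j)$, where $q_j \in V_{B,j-1}$ is the center selected by the strategy at that node. Since $|V_{B,D}| \leq 1 < n/2$, the index $J := \max\{j : |V_{B,j}| > n/2\}$ is well defined with $0 \leq J \leq D-1$, and it suffices to prove $J = \Omega(r)$.

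The main tool is the planar vertex-isoperimetric inequality on $V(B_r)$: there is a constant $c > 0$ so that every $S \subseteq V(B_r)$ with $|S| \leq n/2$ satisfies $|\partial_{V(B_r)} S| \geq c \sqrt{|S|}$, where $\partial_{V(B_r)} S = \{v \in V(B_r) \setminus S : v \sim u \text{ for some } u \in S\}$. Setting $Q_j := \{q_1, \ldots, q_j\}$, each $V_{B,j}$ is a connected component of $V(B_r) \setminus B_1(Q_j)$; because distinct components of this set have no edges between them in $V(B_r)$, we obtain $\partial_{V(B_r)}(V_{B,j}) \subseteq B_1(Q_j)$. I would then apply the isoperimetric inequality to $S = V(B_r) \setminus V_{B,J}$ (of size $\leq n/2$ by the definition of $J$), whose $V(B_r)$-boundary equals the inner boundary of $V_{B,J}$ and has size at most $6 |B_1(Q_J)| = O(J)$. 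This gives $|V_{B,J}| \geq n - O(J^2)$.

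Next I would look at the step from $V_{B,J}$ to $V_{B,J+1}$. By maximality of $J$, every connected component $C$ of $V_{B,J} \setminus B_1(q_{J+1})$ has $|C| \leq n/2$. Each such $C$ is also a component of $V(B_r) \setminus B_1(Q_{J+1})$, so its outer boundary lies in $B_1(Q_{J+1})$ and has size at most $7(J+1)$; the isoperimetric inequality then gives $|C| = O(J^2)$. Moreover, the number of components is a universal constant, since each is joined by an edge in $V_{B,J}$ to the set $B_1(q_{J+1}) \cap V_{B,J}$ of size at most $7$, whose vertices have degree $\leq 6$ in the triangular lattice. Summing, the total size of all components is $O(J^2)$, but the same total equals $|V_{B,J}| - |B_1(q_{J+1}) \cap V_{B,J}| \geq n - O(J^2) - 7$. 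Rearranging forces $n = O(J^2)$, and hence $J = \Omega(\sqrt{n}) = \Omega(r)$.

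The step I expect to be the main obstacle is establishing the planar vertex-isoperimetric inequality on $V(B_r)$ with a constant $c$ independent of $r$. The inequality is standard for the infinite triangular lattice, but one must verify that the constant does not degrade for subsets close to the combinatorial boundary of $B_r$; I would handle this either directly using the planar embedding and bounded degree of $B_r$, or by a reduction to the infinite-lattice small-set isoperimetric inequality. Once that input is in hand, the rest is the routine accounting described above.
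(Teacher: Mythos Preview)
Your argument is correct, and it shares the paper's backbone: follow the greedy ``largest child'' path, use that the outer boundary of each $V_{B,j}$ sits inside $B_1(Q_j)$ (this is the paper's \Cref{claim:ancestors}), and finish with an isoperimetric inequality on $B_r$. The differences are in the stopping rule and in the form of isoperimetry invoked.

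You stop at $J = \max\{j : |V_{B,j}| > n/2\}$, which forces you to analyze both the complement $V(B_r)\setminus V_{B,J}$ and every component at step $J+1$, each of which may be arbitrarily small; hence you need the small-set inequality $|\partial S| \geq c\sqrt{|S|}$ uniformly over $B_r$, and you correctly identify the boundary effects of $B_r$ as the delicate point. The paper instead stops at the last $V_*$ with $|V_*| \geq |V(B_r)|/12$ and uses the constant-branching bound (\Cref{claim:3-components}) to get $|V_*| < |V(B_r)|/2 + 1$. Now $V_*$ itself has moderate size, so a single application of a \emph{balanced} isoperimetric inequality (\Cref{lem:iso}: $|\partial K| > Cr$ whenever $\tfrac{1}{100}n < |K| < \tfrac{99}{100}n$) gives $|\partial V_*| > Cr$, and $|\partial V_*| \leq 7|L_*|$ finishes immediately.

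What each route buys: the paper's stopping rule avoids small sets altogether, so it needs only the balanced inequality, which they prove in one paragraph via a fiber/line argument and which is insensitive to the combinatorial boundary of $B_r$. Your route works too, but it requires the stronger small-set isoperimetry (with a constant uniform in $r$) and a two-stage accounting; the extra steps are not needed for the conclusion. If you want to simplify, shift your threshold from $n/2$ to a smaller constant fraction and use the branching bound to sandwich $|V_{B,J}|$ on both sides --- then you can drop Steps~3--7 entirely.
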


\begin{remark}
   The lower bound in \Cref{lemma:running-time}
    is tight, as the strategy illustrated on \Cref{fig:tight} achieves it.
    Roughly speaking, with a coloring cost of $O(r)$ we can break the graph to connected components, each of size at most half of the graph we started with and then recurse. 
Induction shows that $\boldsymbol{\operatorname{color}\pmb(V(B_r) \pmb)}=O(r)$.
\end{remark}

\begin{figure}\centering
\includegraphics[page=13]{Figures_Final.pdf}
\caption{}
\label{fig:tight}
\end{figure}

\begin{figure}\centering
\includegraphics[page=6]{Figures_Final.pdf}
\caption{}
\label{fig:algorithm}
\end{figure}

\subsection*{Isoperimetry}

\

\medskip
The proof of the lower bound on the cost of the coloring game relies on isoperimetric properties of the graph
$B_r$. There are many known isoperimetric inequalities for similar scenarios (see e.g.~\cite{angel2018isoperimetric} and references therein), but we were unable to locate in the literature the particular one we need. 
The (outer vertex) boundary $\partial K$ of a subset $K\subseteq V(B_r)$ is 
    \[
    \partial K := \{u\in V(B_r) \setminus K : \exists v \in K,\{v,u\} \in E(B_r) \} .
    \]
    
        \begin{lemma}
    \label{lem:iso}
        There exists $C>0$ such that for all $K\subset V(B_r)$ so that \begin{align}
            \label{eqn:sizeK}
            \frac{1}{100} |V(B_r)| < |K|<\frac{99}{100} |V(B_r)|
        \end{align} 
        we have 
        $$|\partial K|>Cr.$$
    \end{lemma}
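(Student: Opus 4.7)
The plan is to reduce the discrete inequality to the classical relative isoperimetric inequality for a planar convex body. Fix the natural embedding of $B_r$ in the triangular lattice in $\R^2$ and, for each $v\in V(B_r)$, let $h_v$ denote its (hexagonal) Voronoi cell, which has unit area up to a fixed constant. The hexagons $\{h_v\}_{v\in V(B_r)}$ tile a planar region
$$H \;:=\; \bigcup_{v\in V(B_r)} h_v,$$
whose area equals $|V(B_r)|\cdot\mathrm{area}(h_v)=\Theta(r^2)$ and which lies between two concentric regular hexagons of side-lengths $r\pm O(1)$, hence is ``almost convex'' of diameter $\Theta(r)$.

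Given $K\subseteq V(B_r)$ satisfying \eqref{eqn:sizeK}, set $R_K:=\bigcup_{v\in K}h_v$, so that both $R_K$ and $H\setminus R_K$ have area $\Theta(r^2)$. The internal boundary $\partial R_K\cap\mathrm{int}(H)$ is a disjoint union of hexagon edges; each such edge separates cells $h_u,h_v$ corresponding to an edge $\{u,v\}$ of $B_r$ with $u\in K$ and $v\notin K$. Since each hexagon edge has length $O(1)$ and $B_r$ has maximum degree $6$, the number of such edges is at most $6|\partial K|$, giving
$$\mathcal{H}^1\!\bigl(\partial R_K \cap \mathrm{int}(H)\bigr) \;\leq\; C_1\cdot|\partial K|.$$
In the opposite direction, I would invoke the scale-invariant planar relative isoperimetric inequality: for any convex body $\Omega\subset\R^2$ and any measurable $S\subseteq\Omega$,
$$\mathcal{H}^1\!\bigl(\partial S \cap \mathrm{int}(\Omega)\bigr) \;\geq\; c(\Omega)\cdot\min\bigl(|S|,|\Omega\setminus S|\bigr)^{1/2},$$
with $c(\Omega)$ depending only on the shape of $\Omega$. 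Applied to $\widetilde H:=\conv(H)$ and $S=R_K$, together with the estimate that $\partial R_K\cap(\widetilde H\setminus H)$ contributes length $O(r)$ (negligible next to the bound we seek), this yields
$$\mathcal{H}^1\!\bigl(\partial R_K \cap \mathrm{int}(H)\bigr) \;\geq\; c_2\, r.$$
Combining the two inequalities gives $|\partial K|\geq (c_2/C_1)\,r$, as desired.

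The main technical obstacle is that $H$ is not literally convex: its outer boundary zig-zags in hexagonal steps. The argument above resolves this by passing to $\widetilde H$ and absorbing an $O(r)$ discrepancy, which is of the same order as but strictly smaller than the $\Omega(r^2)$ area gap forced by \eqref{eqn:sizeK} --- so scale invariance keeps the constants on the right side and independent of $r$. If one prefers a fully combinatorial route, an alternative is a Fubini/slicing argument along the three natural row-directions of the triangular lattice: there are $\Theta(r)$ rows per direction, each a path of $O(r)$ vertices, and \eqref{eqn:sizeK} forces $\Omega(r)$ rows in at least one direction to contain both a vertex of $K$ and a vertex of $V(B_r)\setminus K$; each such row contributes at least one vertex to $\partial K$, giving the same $\Omega(r)$ bound.
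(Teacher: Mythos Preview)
Your main approach via the planar relative isoperimetric inequality has a real gap at the step where you pass from $\tilde H=\conv(H)$ back to $H$. The correction term $\mathcal{H}^1\bigl(\partial R_K\cap(\operatorname{int}\tilde H\setminus\operatorname{int}H)\bigr)$ is indeed $O(r)$, but the lower bound you are aiming for is also only $\Omega(r)$; these are the \emph{same} order, so you cannot simply ``absorb'' the discrepancy without comparing the two implicit constants. Your justification --- that the $O(r)$ discrepancy is ``strictly smaller than the $\Omega(r^2)$ area gap'' --- compares a length to an area and does not address the issue. Concretely, the relative isoperimetric inequality in the hexagon $\tilde H$ gives $\mathcal{H}^1(\partial R_K\cap\operatorname{int}\tilde H)\geq c\,r$ for some universal $c>0$, but the zig-zag boundary $\partial H$ has total length $C'r$ with $C'$ a specific constant, and almost all of it lies in $\operatorname{int}\tilde H$; there is no reason to have $c>C'$. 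The argument can be repaired (for instance, $H$ is a John domain with $r$-independent John constant, so the relative isoperimetric inequality holds for $H$ itself), but as written it does not go through.

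Your alternative slicing sketch is much closer in spirit to the paper's proof, but the key assertion --- that \eqref{eqn:sizeK} forces $\Omega(r)$ non-trivial rows in at least one of the three directions --- is stated without argument, and it is not immediate: one can have $O(1)$ non-trivial rows in a single direction (take $K$ to be a half-hexagon), so the interaction between directions must be exploited. The paper avoids this entirely by working with a \emph{single} direction and doing a case split: if that direction already has $\geq r/1000$ non-trivial fibers, each contributes a distinct boundary vertex and we are done; otherwise there exist a full fiber and an empty fiber, and the triangular-grid structure provides $r$ vertex-disjoint paths between them, each of which must cross $\partial K$. This second case, using disjoint paths rather than a multi-direction pigeonhole, is the idea missing from your sketch.
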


    \begin{proof}
 Let $K \subset V(B_r)$ be so that \eqref{eqn:sizeK} holds.  
The graph $B_r$ is embedded in the plane as part of the triangular grid.
Let $u \in \R^2$ be parallel to one of the edges of the triangles.
Partition $V(B_r)$ to fibers 
$\{V_i : i \in I \}$ according to lines that are parallel to $u$, where $I$ is of size $|I| = 2r+1$.
In other words, $V_i$ is the set of all vertices $v$ that belong to the same line (which is parallel to $u$), see \Cref{fig:fibers}.
We can imagine that $I$ as a set of points on the line~$u^\perp$. For $i \in I$, let $K_i = K \cap V_i$ be the fiber of $K$ over $i$. 
We call the fiber $K_i$ empty if $|K_i| = 0$.
We call the fiber $K_i$ full if $|K_i| = |V_i|$.
We call the fiber $K_i$ trivial if it is either empty or full.

Every non-trivial fiber contributes at least one to the boundary of $K$,
so if the number of non-trivial fibers is at least $\frac{r}{1000}$ then we are done. We can assume that the number of non-trivial fibers is less than $\frac{r}{1000}$. 
By~\eqref{eqn:sizeK},
there are full fibers and empty fibers.
Let $i_e,i_f$ be an empty fiber and a following full fiber.
There are $r$ vertex-disjoint paths between vertices of $i_e$ and $i_f$; see \Cref{fig:fibers}. 
Each of the paths has a pair of adjacent vertices, one in $K$ and one not in $K$. We can conclude that $|\partial K| \geq r$.
    \end{proof}

     \begin{figure}\centering
     \includegraphics[page=14]{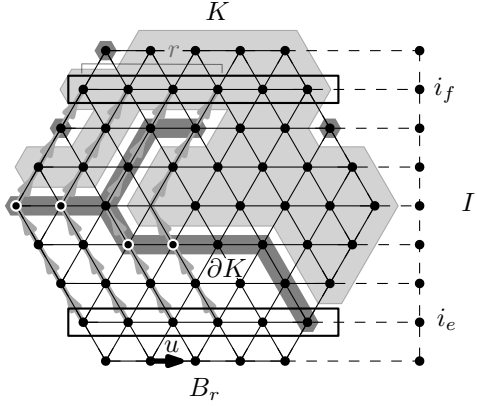}
 \caption{. Disjoint paths.}
    \label{fig:fibers}
     \end{figure}

\subsection*{The game tree}

    We begin with the following observation about the maximum degree of this recursion tree.
    
    \begin{claim}
    \label{claim:3-components}
        For all $V_B \in V(B_r)$ and $q\in V_B$, coloring $B_1(q)$ white creates at most six new black connected components.
        In other words,
        in $\pmb{ \operatorname{color}}$,
        we always have that $\ell \leq 6$.
    \end{claim}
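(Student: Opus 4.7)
The plan is to exploit the local structure of the triangular lattice around $q$. By induction on the recursion, $V_B$ is always a connected subset of $V(B_r)$ with $q \in V_B$, so I first argue that each new component $C$ of $V_B \setminus B_1(q)$ contains a vertex at graph-distance exactly $2$ from $q$ in the lattice. Fix $w \in C$ and take a path from $q$ to $w$ inside the connected set $V_B$; on this path, the last vertex lying in $B_1(q)$ is followed by a vertex $w' \in V_B \setminus B_1(q)$ that is adjacent in $B_r$ to a neighbor of $q$. All vertices on the path between $w'$ and $w$ are outside $B_1(q)$ by maximality, hence in $C$; in particular $w' \in C$, and since $w'$ is adjacent to a neighbor of $q$ but not itself in $B_1(q)$, it lies at distance exactly $2$ from $q$ in the lattice.

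The key geometric fact I would use is that, in the infinite triangular lattice, the vertices at distance exactly $2$ from $q$ form a single $12$-cycle $\Gamma_q$ whose consecutive vertices are adjacent as lattice vertices. Concretely, if $v_0, \ldots, v_5$ are the six neighbors of $q$ arranged cyclically, then $\Gamma_q$ alternates between the six ``corners'' $2v_i$ and the six ``midpoints'' $v_i + v_{i+1}$, and each consecutive pair on this cyclic arrangement differs by a lattice unit vector (a short coordinate check). Since $B_r$ is an induced subgraph of the lattice, any two consecutive vertices of $\Gamma_q$ that both lie in $V(B_r)$ are adjacent in $B_r$ as well.

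To finish, let $W = V(\Gamma_q) \cap (V_B \setminus B_1(q))$. By the first paragraph, every new component contains at least one vertex of $W$, and by the second paragraph, any two vertices of $W$ that are adjacent on $\Gamma_q$ lie in the same new component (the connecting edge survives the deletion of $B_1(q)$). Hence $\ell$ is bounded by the number of maximal arcs of $W$ on $\Gamma_q$, which is at most $6$ for any subset of a cycle of length $12$. The only minor technicality is the case where $q$ is close to the boundary of $B_r$ and some vertices of $\Gamma_q$ are missing from $V(B_r)$; then $V(\Gamma_q) \cap V(B_r)$ is a union of subpaths of $\Gamma_q$ of total length at most $12$, and the same bound of $6$ still applies. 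The main technical ingredient is the $12$-cycle fact about $\Gamma_q$, but this is elementary once the local lattice coordinates are written down.
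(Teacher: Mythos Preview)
Your argument is correct and follows essentially the same route as the paper: both proofs look at the set $U = B_2(q)\setminus B_1(q)$ (your $\Gamma_q$), observe that it forms a $12$-cycle in the triangular lattice, and conclude that at most six black arcs can occur on it, hence $\ell\le 6$. Your write-up is more explicit than the paper's---you spell out why every new component meets $\Gamma_q$ and why consecutive $\Gamma_q$-vertices in $W$ stay in the same component---but the underlying idea is identical.
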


     \begin{figure}\centering
     \includegraphics[page=7]{Figures_Final.pdf}
 \caption{ }
    \label{fig:connected-components}
     \end{figure}

    \begin{proof}
        Fix some vertex $q\in V_B$ and assume that deleting $B_1(q)$ created $t$ new black connected components. Consider
        the set of vertices
        $U:=B_2(q)\setminus B_1(q) = \partial B_1(q)$; 
        \Cref{fig:connected-components} 
        may be helpful here. 
        For an ``inner $q$'', this set is a hexagon with twelve vertices. 
        The size of $U$ is at most twelve. 
        The set $U$ must contain $t$ black vertices that are separated by $t$ white vertices. 
        This forces $t \leq 6$. 
    \end{proof}

    The next ingredient relates
    the game tree to boundaries of sets.

    \begin{claim}
    \label{claim:ancestors}
Let $V_B \subseteq V(B_r)$ be a set that is obtained during the execution of $\pmb{ \operatorname{color}(V(B_r))}$ with some strategy. 
Then, $\partial V_B \subseteq B_1(L)$
where $L  = L(V_B)$.
    \end{claim}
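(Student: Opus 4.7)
The plan is to prove the inclusion by induction on the depth of the node $V_B$ in the recursion tree built by the strategy. The base case is the root, where $V_B = V(B_r)$ and $L = \emptyset$; here $\partial V_B = \emptyset$ because $V(B_r)$ is the whole vertex set of $B_r$, so the claim is vacuous.

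For the inductive step, let $V_B'$ be the parent of $V_B$ in the tree and let $q$ be the vertex selected at $V_B'$, so that $V_B$ is one of the connected components of $V_B' \setminus B_1(q)$ and $L(V_B) = L(V_B') \cup \{q\}$. Fix $u \in \partial V_B$, so $u \notin V_B$ and $u$ is adjacent in $B_r$ to some $v \in V_B$. I would split into two cases depending on whether the elimination of $u$ is ``caused'' by $q$ or by an earlier choice.

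\textbf{Case 1:} $u \in B_1(q)$. Then $u \in B_1(L)$ and we are done.

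\textbf{Case 2:} $u \notin B_1(q)$. The key subclaim is that $u \notin V_B'$. Indeed, if $u$ were in $V_B'$ then $u$ would lie in $V_B' \setminus B_1(q)$ (since $u \notin B_1(q)$); but $v \in V_B$ is a vertex of the connected component $V_B$ of $V_B' \setminus B_1(q)$, and $u$ is adjacent to $v$ in this induced subgraph, so $u$ would have to lie in the same component, i.e., $u \in V_B$, contradicting $u \in \partial V_B$. Hence $u \notin V_B'$, and since $u$ is adjacent to $v \in V_B \subseteq V_B'$, we get $u \in \partial V_B'$. The inductive hypothesis applied to $V_B'$ then gives $u \in B_1(L(V_B')) \subseteq B_1(L)$.

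I do not expect any serious obstacle here; the argument is essentially a bookkeeping induction. The only point requiring care is the case distinction on whether the boundary vertex $u$ was ``removed'' at the current step (through $B_1(q)$) or was already absent from the parent component $V_B'$, and ruling out the possibility that $u$ lives in a sibling component of $V_B$ — which is exactly what Case 2 above handles via the adjacency of $u$ to $v \in V_B$.
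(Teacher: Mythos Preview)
Your proof is correct and follows essentially the same approach as the paper: induction down the recursion tree with the key observation that $\partial V_B \subseteq \partial V_B' \cup B_1(q)$, where $V_B'$ is the parent node and $q$ is the vertex selected there. The paper's proof simply asserts this containment in one line, whereas you have supplied the case analysis that justifies it.
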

    \begin{proof}
    The proof is by induction starting at the root. In the induction base, $\partial V(B_r)$ is empty and there is nothing to prove. For the induction step, let $q$ be the vertex selected in ${\boldsymbol{\operatorname{color}\pmb(V_B \pmb)}}$. For each of the components $C_q^i$, the boundary $\partial C_q^i$ is contained in the union of $\partial V_B$ and $B_1(q)$.
    \end{proof}

\subsection*{The lower bound proof}

\medskip

    \begin{proof}[Proof of \Cref{lemma:running-time}] 
Define a sequence of sets $V_1,V_2,\ldots$ as follows. Start by $V_1 = V(B_r)$.
Given $V_i$, let $V_{i+1}$ be the largest connected component in the application of $\pmb{ \operatorname{color}(V_i)}$.
By \Cref{claim:3-components},
we know that $|V_{i+1}| \geq (|V_i|-1)/6$.
Let $V_*$ be the last set in the sequence so that $|V_i| \geq |V(B_r)|/12$.
It follows that $|V_{i+1}| < |V(B_r)|/12$
and so $|V_*| < (|V(B_r)|/2)+1$.
By \Cref{lem:iso}, we know that $|\partial V_*| > C r$.
Let $L_* \subseteq V(B_r)$ be the set of vertices chosen in the path leading to~$V_*$.
By \Cref{claim:ancestors},
we know that $|\partial V_*| \leq 7 |L_*|$.
Because $\pmb{ \operatorname{color}(V_1)}
\geq |L_*|$, we are done. 
\end{proof}

\begin{proof}[Proof of \Cref{thm-icnn-3D}]
The theorem is trivial for small values of $m$. 
Given $m$, let $r \approx \sqrt{m}$ be so that $\frac{m}{10} \leq |V(B_r)| \leq m-1$.
The polytope $P_r$ has at most $m$ vertices. 
Assume that $P_r \in \cP(\ICNN_{3,k})$.
By \Cref{lemma:network-alg}, 
there is a coloring strategy so that
$$k \geq {\boldsymbol{\operatorname{color}\pmb(V(B_r) \pmb)} }.$$
By \Cref{lemma:running-time},   $${\boldsymbol{\operatorname{color}\pmb(V(B_r) \pmb)}} \geq C r .$$
\end{proof}

\section{Inapproximability}
\label{sec:Inapprox}

In this section, we prove that
$\MAX_2$ can not be approximated
by monotone $\relu$ networks. 
We start by proving that monotone
$\relu$ networks 
have isotonic and non-negative sub-gradients.

\begin{proof}[Proof of \Cref{lem:isoto}]
The proof is by induction.
The induction base 
corresponds to monotone affine functions for which the lemma holds.
For the induction step,
let $F = \relu(G)$ with 
$G = a_0+\sum_{j>0} a_j F_j$ where $F_j$ satisfy the induction hypothesis
and $a_j > 0$ for $j>0$.
The $\relu$ gate zeros out all the negative values of $G$.
By the sub-gradient sum property,
for all $x$,
$$\partial G (x) = \sum_{j>0} a_j \partial F_j(x).$$
It follows that (by (2) and (3) from \Cref{clm:sub-grad})
\begin{align*}
\partial F(x)
& \leq 
 \sum_{j>0} a_j \partial F_j(x) .
\end{align*}
The induction hypothesis 
and the fact that $a_j >0$ for $j>0$
imply
that the gradient of $F$ is non-negative. 
Now, let $x \leq y$.
If $G(x)<0$ then $\partial F(x) = \{0\}$ and isotonicity follows.
Otherwise, when $G(y)>0$, we have
\begin{align*}
\partial F(x)
 \leq  \sum_{j>0} a_j \partial F_j(x)  \leq
 \sum_{j>0} a_j \partial F_j(y)    = \partial F(y) . 
\end{align*}

Finally, when $G(x)=G(y) = 0$,
\begin{align*}
\partial F(x)
& = \conv \Big( \{0\} \cup
\Big( \sum_{j>0} a_j \partial F_j(x) \Big) \Big) \\
&  \leq
  \conv \Big( \{0\} \cup
\Big( \sum_{j>0} a_j \partial F_j(y) \Big) \Big) = \partial F(y). \qedhere
\end{align*}

\end{proof}

We shall use the following simple one-dimensional claim.

\begin{claim}
\label{clm:in-appx1}
Let $f: [0,1] \to \R$ be a convex $\CPWL$ function.
Let $a,b \in \R$.
For every $x \in [0,1]$, choose
$g_x \in \partial f(x)$.
Then, there is $x \in [0,1]$ so that
$$g_x  \leq a+ 8  \int_0^1 |f(y) - (a y+b) | dy $$
and there is $x' \in [0,1]$ so that
$$g_{x'}  \geq a- 8  \int_0^1 |f(y) - (a y+b)| dy .$$
\end{claim}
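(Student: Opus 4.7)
The plan is to reduce to a statement about the convex $\CPWL$ function $h(y) := f(y) - (ay+b)$. Since $\partial h(x) = \partial f(x) - a$, writing $\tilde g_x := g_x - a \in \partial h(x) = [h'(x^-), h'(x^+)]$ turns the two inequalities into: with $\eps := \int_0^1 |h|$, there exist $x, x' \in [0,1]$ with $\tilde g_x \le 8\eps$ and $\tilde g_{x'} \ge -8\eps$. I will prove the first inequality by contradiction; the second is obtained by running the same argument on $y \mapsto -h(1-y)$.

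Suppose, towards a contradiction, that $\tilde g_x > 8\eps$ for every $x \in [0,1]$. Since $\tilde g_x \le h'(x^+)$, this forces $h'(x^+) > 8\eps$ throughout $[0,1]$, and convexity then yields $h(y) > h(x) + 8\eps(y - x)$ for all $0 \le x \le y \le 1$. In particular $h$ is strictly increasing with ``slope exceeding $8\eps$'' on all of $[0,1]$.

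I split on the sign of $h$ at the endpoints. If $h(0) \ge 0$, then $h \ge 0$ and $h(y) \ge 8\eps\,y$, so $\int_0^1 |h| \ge 4\eps$. Symmetrically, if $h(1) \le 0$, then $|h(y)| \ge 8\eps(1-y)$ and $\int_0^1 |h| \ge 4\eps$. Otherwise $h(0) < 0 < h(1)$, so $h$ has a unique zero $y_0 \in (0,1)$; tangent-line estimates on each side of $y_0$ give $|h(y)| \ge 8\eps\,|y - y_0|$, and hence
\[
\int_0^1 |h| \ge 4\eps \big(y_0^2 + (1-y_0)^2\big) \ge 2\eps.
\]
All three cases contradict $\int_0^1 |h| = \eps$ (when $\eps = 0$, $h \equiv 0$ and the claim is immediate).

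The main bookkeeping step is that Case~3 (zero-crossing of $h$) is the extremal case, since $\min_{y_0 \in [0,1]}(y_0^2 + (1-y_0)^2) = \tfrac12$; together with the factor $4$ coming from integrating a triangular region this is what determines the admissible constant, and the statement's choice of $8$ is a safe margin.
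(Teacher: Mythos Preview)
Your argument is correct and complete. One small wrinkle: the reflection you invoke for the second inequality, ``run the same argument on $y\mapsto -h(1-y)$'', is not quite the right object, since $-h(1-y)$ is concave. The clean way to phrase the symmetry is to apply the first inequality to $\hat f(y)=f(1-y)$ with $\hat a=-a$ and $\hat b=a+b$; then $\hat f(y)-(\hat a y+\hat b)=h(1-y)$ has the same $L^1$ norm, and $-g_{1-y}\in\partial\hat f(y)$, so the conclusion $-g_{1-y}\le -a+8\eps$ gives $g_{1-y}\ge a-8\eps$. Alternatively one can rerun your case analysis verbatim under the hypothesis $\tilde g_x<-8\eps$; either fix is immediate.

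Your route is genuinely different from the paper's. The paper first reduces to the case where $h$ vanishes at some $x_0$ (by shifting $b$, which only decreases $\int|h|$), then works on the longer half-interval and uses the Fubini identity
\[
\int_{x_0}^{1} h(x)\,dx \;=\; \int_{x_0}^{1}(g_y-a)(1-y)\,dy
\]
to derive a contradiction from a uniform bound on $g_y-a$. You instead keep $h$ as is, use the subgradient inequality to get the pointwise bound $|h(y)|>8\eps\,|y-y_0|$ (with the no-zero cases handled separately), and integrate. Your approach is more elementary---no WLOG reduction, no change-of-order-of-integration---and makes the origin of the constant transparent via $\min_{y_0}\int_0^1|y-y_0|\,dy=\tfrac14$. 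The paper's approach is shorter once the reductions are in place and yields both inequalities on the same half-interval in one stroke.
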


\begin{remark}
Convexity is not essential for the claim above. A $\CPWL$ function is differentiable almost everywhere, and we can replace $g_x$ by the gradient at $x$. 
\end{remark}

\begin{proof}
Observe that without loss
of generality $f(x_0) = a x_0 + b$ for some $x_0 \in [0,1]$ because otherwise we can reduce the integral by changing $b$.
If $x_0 \leq 1/2$, argue as follows.
Because $f$ is differentiable almost everywhere, for $x > x_0$ we can write
$$f(x) = f(x_0)+ \int_{x_0}^x g_y dy.$$
A similar statement holds for $ax+b$.  
Let 
$$c:=\int_{x_0}^1 (f(x) - (a x+b)) dx .$$
Write
\begin{align*}
c
& = \int_{x_0}^1 \int_{x_0}^x (g_y-a) dy dx \\
& = \int_{x_0}^1 (g_y-a) \int_{y}^1  dx dy \\
& = \int_{x_0}^1 (g_y-a) (1-y) dy .
\end{align*}

Assume without loss of generality that $c \geq 0$.
Because $f$ is $\CPWL$,
because $y\leq 1$ and because $\int_{x_0}^1 (1-y) dy \geq \frac{1}{8}$, 
if for all $y \in [x_0,1]$ we have $g_y-a > 8c$ then
$$c = \int_{x_0}^1 (g_y-a)(1-y)dy > 
8c \int_{x_0}^1 (1-y)dy \geq 
c,$$
which is a contradiction.

It follows that there exists $x$ so that $g_{x} -a \leq 8c$.
Similarly,
if for all $y \in [x_0,1]$ we have $g_y-a < 0$ then
$$c = \int_{x_0}^1 (g_y-a)(1-y)dy < 0 \leq c,$$
which is a contradiction.
It follows that there exists $x'$ so that $g_{x'} -a \geq 0 \geq - 8c$. The case $x_0 > 1/2$ is symmetric ($x \mapsto 1-x$).
\end{proof}

\begin{proof}[Proof of \Cref{thm:noApp}]
Let $F \in \CPWL_2$ be convex 
with isotonic gradient. 
Let
$$\eps:= \int_{x_1 \in [0,1]} \int_{x_2 \in [0,1]} |F(x) - \MAX_2(x)| dx_2 dx_1 .$$
It follows that
$$\int_{x_1 \in [1/4,1/2]} \int_{x_2 \in [0,1/4]} |F(x) - \MAX_2(x)| dx_2 dx_1 \leq \eps.$$
In this domain, $\MAX_2(x) = x_1$.
It follows 
that for some $x^*_2 \in [0,1/4]$,
$$ \int_{x_1 \in [1/4,1/2]} |F(x_1,x_2^*) - x_1 )|d  x_1  \leq 4 \eps.$$
For every $x_1 \in [0,1/4]$,
choose $g_{x_1} \in \partial F(x_1,x_2^*)$.
Let $f^*(x_1)=F(x_1,x_2^*)$. 
Notice that $(g_{x_1})_1$ is also a sub-gradient of $f^*$ at $x_1$. By \Cref{clm:in-appx1} applied to $f^*$,
there is $x^*_1 \in [1/4,1/2]$
so that $$g^*_1  \geq 1 - 128 \eps$$
where $g^* = g_{x_1^*,x_2^*}$.

In the domain $[1/2,3/4] \times [3/4,1]$, 
the function $\MAX_2(x)$ is equal to $x_2$. For a fixed $x_2$, as a linear function in $x_1$, its slope is zero. By a similar argument as above, there is $\tilde x$ in this domain and $\tilde g \in \partial F(\tilde x)$
so that $${\tilde g}_1 \leq 0+128 \eps.$$

Finally, because $x^* \leq \tilde x$, the isotonic assumption implies that
\begin{equation*}
1 - 128 \eps \leq g^*_1 \leq {\tilde g}_1 \leq 128 \eps. \qedhere 
\end{equation*}
\end{proof}

\section{Monotone depth lower bounds}
\label{sec:Monotone}

In this section, we prove \Cref{thm:LB-f}; we show that 
$$\f_n(x) = \relu(x_n + \f_{n-1}(x_1,\ldots,x_{n-1}))$$
requires monotone $\relu$ networks of depth at least $n$,
where $\f_0 = 0$. 
An analogous argument proves~\Cref{thm:MaxICNN}.

The proof is based on the observation that
$N(\f_n)$ is the $n$-dimensional simplex with vertex-set
$$V_n = \{0,e_1,e_1+e_2,\ldots,e_1+e_2+\ldots+e_n\}$$
where $e_i$ is the $i$'th standard unit vector. This is important because the simplex is known to be indecomposable (as discussed above). 
These simplices belong to a type of simplices known as Schl\"afli orthoscheme; see
\Cref{fig:orthoscheme}.

\begin{figure}
	\centering
	\includegraphics[page=15]{Figures_Final.pdf}
	\caption{. The simplex $N(\f_3)$.}
    \label{fig:orthoscheme}
\end{figure}

We are ready to prove the depth lower bound.
The proof is by induction on $n$.
The case $n=1$ is trivial.
Assume that $n>1$ and that $\f_n \in \relu^+_{n,k}$.
It follows that $N(\f_n) \in \cP(\relu^+_{n,k})$.
Write
$$N(\f_n) = \Big( \sum_j a_j \conv (\{0\}\cup P_j)
\Big) + \Big( \sum_i a'_i P'_i \Big)$$
for $P_j \in \cP(\relu^+_{n,k-1})$, $P'_i \in \cP(\relu^+_{n,k-1})$ and $a_j,a'_i >0$;
each of the two sums may be empty.
Because $N(\f_n)$ is indecomposable,
it follows by induction that there is some polytope $P_*$ in $\cP(\relu^+_{n,k-1})$ so that $$N(\f_n) = \conv (\{0\} \cup P_*).$$
In particular, with $u=e_1$ we have
$$E:=N(\f_n) \cap H(N(\f_n),u) = P_* \cap H(P_*,u)$$
and $h(P_*,u) = 1$. 
The polytope $E-u$ is equal to $N(\f_{n-1})$
in the space~$u^{\perp}$.
For every polytope $P$ that appears in the construction of $P_*$,
replace $P$ by $(P \cap H(P,u)) - h(P,u) u \subset u^\perp$.
This replacement is defined inductively.
Points $p$ are orthogonally projected to $p' \in u^\perp$; the point $p'$ is still non-negative
in $u^\perp$.
If $P = \sum_j P_j$
and each $P_j$ was replaced by $P'_j$,
then replace $P$ by $P' = \sum_j P'_j$.
If $P = \conv (\{0\} \cup Q)$
and $Q$ was already replaced by $Q'$,
then either replace $P$ by $Q'$
or by $\conv (\{0\} \cup Q')$
depending on whether $Q \subset u^\perp$ or not.
By \Cref{clm:supp},
we can deduce that
the new $\relu^+_{n,k-1}$ network
computes $N(\f_{n-1})$.
By induction, we can deduce that $n \geq k$.

\bibliographystyle{amsplain}
\bibliography{mr.bib}

\end{document}